\def\isarxiv{1} 
\definecolor{mydarkblue}{rgb}{0,0.08,0.45}
\theoremstyle{plain}
\newtheorem{theorem}{Theorem}[section]
\newtheorem{lemma}[theorem]{Lemma}
\newtheorem{definition}[theorem]{Definition}
\newcommand{\N}{\mathcal{N}}
\newcommand{\R}{\mathbb{R}}
\renewcommand{\d}{\mathrm{d}}
\DeclareMathOperator*{\E}{{\mathbb{E}}}
\newcommand*{\RN}[1]{\expandafter\@slowromancap\romannumeral #1@}
\begin{document}

\ifdefined\isarxiv

\date{}

\title{Force Matching with Relativistic Constraints: A Physics-Inspired Approach to Stable and Efficient Generative Modeling}


\author{
Yang Cao\thanks{\texttt{ ycao4@wyomingseminary.org}. Wyoming Seminary}
\and
Bo Chen\thanks{\texttt{ bc7b@mtmail.mtsu.edu}. Middle Tennessee State University.}
\and
Xiaoyu Li\thanks{\texttt{
xli216@stevens.edu}. Stevens Institute of Technology.}
\and
Yingyu Liang\thanks{\texttt{
yingyul@hku.hk}. The University of Hong Kong. \texttt{
yliang@cs.wisc.edu}. University of Wisconsin-Madison.} 
\and
Zhizhou Sha\thanks{\texttt{
shazz20@mails.tsinghua.edu.cn}. Tsinghua University.}
\and
Zhenmei Shi\thanks{\texttt{
zhmeishi@cs.wisc.edu}. University of Wisconsin-Madison.}
\and
Zhao Song\thanks{\texttt{ magic.linuxkde@gmail.com}. Simons Institute for the Theory of Computing, University of California, Berkeley.}
\and
Mingda Wan\thanks{\texttt{
dylan.r.mathison@gmail.com}. Anhui University.}
}

\else

\title{Force Matching with Relativistic Constraints: A Physics-Inspired Approach to Stable and Efficient Generative Modeling}

\author{%
}

\fi

\ifdefined\isarxiv
\begin{titlepage}
  \maketitle
  \begin{abstract}
This paper introduces Force Matching (ForM), a novel framework for generative modeling that represents an initial exploration into leveraging special relativistic mechanics to enhance the stability of the sampling process. By incorporating the Lorentz factor, ForM imposes a velocity constraint, ensuring that sample velocities remain bounded within a constant limit. This constraint serves as a fundamental mechanism for stabilizing the generative dynamics, leading to a more robust and controlled sampling process. 
We provide a rigorous theoretical analysis demonstrating that the velocity constraint is preserved throughout the sampling procedure within the ForM framework. To validate the effectiveness of our approach, we conduct extensive empirical evaluations. On the \textit{half-moons} dataset, ForM significantly outperforms baseline methods, achieving the lowest Euclidean distance loss of \textbf{0.714}, in contrast to vanilla first-order flow matching (5.853) and first- and second-order flow matching (5.793). Additionally, we perform an ablation study to further investigate the impact of our velocity constraint, reaffirming the superiority of ForM in stabilizing the generative process.
The theoretical guarantees and empirical results underscore the potential of integrating special relativity principles into generative modeling. Our findings suggest that ForM provides a promising pathway toward achieving stable, efficient, and flexible generative processes. This work lays the foundation for future advancements in high-dimensional generative modeling, opening new avenues for the application of physical principles in machine learning.

  \end{abstract}
  \thispagestyle{empty}
\end{titlepage}

{\hypersetup{linkcolor=black}
\tableofcontents
}
\newpage

\else

\begin{abstract}

\end{abstract}

\begin{CCSXML}
<ccs2012>
   <concept>
       <concept_id>10010147.10010257</concept_id>
       <concept_desc>Computing methodologies~Machine learning</concept_desc>
       <concept_significance>500</concept_significance>
       </concept>
   <concept>
       <concept_id>10010147.10010178</concept_id>
       <concept_desc>Computing methodologies~Artificial intelligence</concept_desc>
       <concept_significance>500</concept_significance>
       </concept>
 </ccs2012>
\end{CCSXML}

\ccsdesc[500]{Computing methodologies~Machine learning}
\ccsdesc[500]{Computing methodologies~Artificial intelligence}



\keywords{Flow Matching, Diffusion Model, Generative Model}
\maketitle 
\fi

\section{Introduction}
The field of generative modeling has witnessed significant progress with the advent of sophisticated techniques that leverage neural networks to synthesize high-quality data. Recent methods such as Diffusion Models (DM) \cite{swm+15,hja20,dn21,rbl+22,zlcz23,zlke23}, Flow Matching (FM) \cite{lcb+22,lgl22,ekb+24}, and the more recent Equilibrium Dynamics Model (EDM) \cite{kaal22, kal+24} have emerged as prominent approaches, each exploring distinct generative paradigms. These techniques differ fundamentally in how they utilize neural networks to evolve data representations over time: while Diffusion Models rely on iterative transformations of a Gaussian-initialized distribution, EDM employs an Ordinary Differential Equation (ODE) for continuous evolution, and FM directly predicts the data's velocity using a neural-network-based velocity field. Such diversity in generative approaches has motivated the need for a unified perspective that can bridge these conceptual differences.

In pursuit of this unification, TrigFlow \cite{ls24} was proposed as a generalized framework that provides a continuous generative process capable of transitioning between the behaviors of EDM and FM. By leveraging trigonometric parameterization, TrigFlow formulates data generation through a combination of trigonometric components, allowing for a flexible representation that captures the strengths of both paradigms. The framework introduces a trigonometric-based parameterization of the generative process, a loss function aligning with diffusion methods, and a probability flow ODE, thereby offering a more comprehensive understanding of generative modeling techniques and providing a foundation for further advancements in this domain.

Building on this unification perspective, this paper introduces Force Matching (ForM) as a novel generative framework inspired by principles of relativistic mechanics to stabilize the sampling process. By incorporating relativistic constraints through the Lorentz factor, ForM ensures stable sampling dynamics, limiting the velocity of generated samples to avoid instability. 
We establish that ForM is well-aligned with consistency models, suggesting its potential to enhance scalable generative modeling solutions.
Our contributions can be summarized as follows:
\begin{itemize}
    \item We propose \textbf{Force Matching (ForM)}, a novel generative modeling framework inspired by relativistic mechanics, which ensures stable sampling by constraining sample velocities through the Lorentz factor.
    \item We establish theoretical foundations for ForM, 
    highlighting its flexibility and scalability.
    \item We conduct extensive empirical evaluations, showing that ForM outperforms baseline flow matching methods in generative tasks and validating the effectiveness of its velocity constraint through ablation studies.
\end{itemize}

These contributions illustrate the promise of force-based methods in generative modeling, emphasizing their capability for stable, efficient, and flexible sampling. This work not only extends our understanding of generative techniques but also lays the foundation for exploring novel high-dimensional generative frameworks that effectively integrate stability and efficiency.

{\bf Roadmap.} In Section~\ref{sec:related_work}, we introduce related work of generative models and flow matching. Then, Section~\ref{sec:preli} introduces the preliminary of Force Matching. We then propose the Force Matching architecture in Section~\ref{sec:form}. Section~\ref{sec:exp} demonstrates empirical experiments of Force Matching, and Section~\ref{sec:ablation} performs ablation study of Force Matching. Finally, we conclude this paper in Section~\ref{sec:conclusion}.
\section{Related work} \label{sec:related_work}

\paragraph{Generating Models.}
Generative models have made significant progress over the last decade, enabling diverse applications such as image synthesis, text generation, and data augmentation. One of the foundational models in this area is the Generative Adversarial Network (GAN) introduced by Goodfellow et al. (2014), which consists of a generator and a discriminator that compete in a zero-sum game, thereby leading to the generation of realistic data samples \cite{goodfellow2014generative}. GANs have inspired a variety of derivative architectures aimed at improving stability and quality, including Wasserstein GANs (WGAN), which address the instability issues of GANs by employing a different distance metric \cite{arjovsky2017wasserstein}. Conditional GANs (cGANs) extend the GAN framework to generate data conditioned on additional information, making them more controllable \cite{mirza2014conditional}. 

Variational Autoencoders (VAEs), proposed by Kingma and Welling (2013), offer another generative approach that combines variational inference with neural networks to learn a latent variable model of data \cite{kingma2013auto}. Unlike GANs, VAEs maximize a lower bound on the log-likelihood of the data, allowing for a more principled probabilistic interpretation. The introduction of the reparameterization trick was key to making VAEs feasible to train with stochastic gradient descent, which has had a considerable impact on the field of deep generative models. Recently, autoregressive models such as PixelCNN \cite{oord2016pixel} and Transformer-based models \cite{vaswani2017attention} have demonstrated impressive performance in tasks like image and text generation. These models learn to predict the next element in a sequence, thereby allowing them to generate samples one step at a time, which has proven particularly effective for generating sequential data such as text and audio. 

The development of large-scale language models like GPT-3 \cite{brown2020language} has further showcased the power of autoregressive architectures in generating coherent and contextually relevant long text, significantly advancing the state-of-the-art in natural language processing. Another line of work explores diffusion models, such as Denoising Diffusion Probabilistic Models (DDPMs), which have gained attention for their ability to generate high-quality images by modeling the process of gradually adding noise to data and then learning to reverse this process \cite{ho2020denoising}. These models provide an alternative to GANs by optimizing likelihood-based objectives, which makes training more stable. DDPMs have set new benchmarks for image generation quality, rivaling the output of GANs while avoiding some of their training difficulties. These developments collectively showcase the evolution of generative models from adversarial training with GANs to likelihood-based training with VAEs, autoregressive models, e.g, Visual autoregressive modeling (VAR) \cite{tjy+24}, and diffusion-based approaches, e.g., DDPM \cite{ho2020denoising}. Each of these methods contributes unique strengths and capabilities, advancing the scope and quality of generated data.

\paragraph{Flow Matching.}
Flow matching \cite{dlt+24,ylp+24,gdb+24,cgl+25_homo} is a key concept in fields such as optimal transport, computer vision, and machine learning, where it has been extensively studied and utilized to align two distributions effectively. The method has roots in the classic work on optimal transport theory, where Monge and Kantorovich initially laid out the foundational ideas for mapping mass distributions with minimal cost \cite{monge1781memoire, kantorovich1942transfer}. Building upon these ideas, Villani expanded the theoretical framework of optimal transport, leading to a rigorous mathematical foundation for flow matching and its related applications \cite{villani2008optimal}. 

Recent advances in machine learning have leveraged flow matching for deep generative modeling tasks. Denoising diffusion probabilistic models (DDPMs), for example, have drawn inspiration from flow-based methods to improve the stability and efficiency of training \cite{ho2020denoising}. Similarly, score-based generative models utilize a stochastic differential equation approach to approximate flows, effectively creating a flow-matching procedure for generating realistic data samples \cite{song2021score}. This approach has demonstrated considerable success in capturing complex, high-dimensional data distributions. Another relevant development in this domain is the introduction of continuous normalizing flows (CNFs) by Chen et al., which formulated generative modeling as a continuous-time flow process, further refining flow-matching techniques for density estimation and improving scalability \cite{chen2018neural}. Grathwohl et al. expanded upon this idea by demonstrating how flow matching could be combined with probability density estimation to achieve more efficient generative models \cite{grathwohl2018ffjord}. These works have collectively highlighted the flexibility of flow matching as a tool for a wide range of machine learning tasks, including unsupervised learning, density estimation, and data synthesis. Moreover, applications in computer vision often rely on flow matching to solve challenging problems such as image registration and optical flow estimation. For instance, deep learning-based approaches have integrated flow matching concepts to align images effectively, demonstrating significant improvements over traditional techniques \cite{dosovitskiy2015flownet, ilg2017flownet2}. FlowNet and its successor FlowNet2 provide compelling evidence of how flow matching can be operationalized within deep neural architectures to solve real-world vision tasks with state-of-the-art accuracy. Video Latent Flow Matching \cite{csy25,jsl+24,dsf23} incorporates flow matching for temporally coherent video generation.
Moreover, numerous recent works \cite{zcwt23,lss+24_relu,cls+24,lss+24_multi_layer,wxz+24,wcz+23,cgl+25_homo,xlc+24,wcy+23,sph+23,cxj24,fjl+24,kll+25,kll+25_tc,cll+25_var,kls+25_dpbloom,lsss24_dpntk,cll+25_deskrej,lssz24_gm,llss24_softmax,lzw+24,hwsl24,hwl+24,ssz+25_dit,ssz+25_prune} have significantly inspired and influenced our work.

\section{Preliminary} \label{sec:preli}

In Section~\ref{sub:notation}, we introduce all the notations we used in our paper. Then, in Section~\ref{sub:flow_matching}, we show the basic facts about flow matching. In Section~\ref{sub:special_relativity}, we present the basic background of special relativity and define the relativistic force.

\subsection{Notations} \label{sub:notation}

For any positive integer $n$, we use $[n]$ to denote set $\{1,2,\cdots, n\}$. 
For two vectors $x \in \R^n$ and $y \in \R^n$, we use $\langle x, y \rangle$ to denote the inner product between $x,y$.
For a vector $v \in \R^n$, we use $\|v\|_2$ to denote the $\ell_2$-norm of $v$.
We use ${\bf 1}_n$ to denote a length-$n$ vector where all the entries are ones.
We use the symbol $ \perp $ to represent a component that is perpendicular to the direction of velocity, as exemplified by $ a_{\perp t} $, which denotes the perpendicular acceleration. Similarly, the symbol $ \parallel $ is employed to indicate a component parallel to the direction of velocity, such as $ f_{\parallel t} $, which represents the parallel force. We use $\dot{x}_t$ to denote $\frac{\d x_t}{\d t}$, and $\ddot{x}_t$ to denote $\frac{\d^2 x_t}{\d t^2}$.

\subsection{Flow Matching} \label{sub:flow_matching}

Flow Matching (FM) \cite{lcb+22,lgl22} is a generative modeling technique that constructs a smooth, invertible (i.e., diffeomorphic) mapping from a simple prior distribution to a complex target distribution. In FM, a time-dependent mapping $Z_t$ is defined to evolve according to an ordinary differential equation (ODE) driven by a vector field:
\begin{align*}
    \frac{\d x_t}{\d t} = V_t(x_t), \quad t \in [0, T].
\end{align*}
The goal is to ensure that, at the terminal time $T$, the ODE transforms a sample $x_0$ from a simple distribution (e.g., a Gaussian) into a sample $x_T$ from the target data distribution $\mathcal{D}$.

To achieve this, Flow Matching (FM) constructs a stochastic interpolation between a sample $x_1 \sim \mathcal{D}$ and a sample $x_0$ drawn from a known prior distribution, typically $\N(0,I)$. The interpolation is defined as
\begin{align*}
    x_t := \alpha_t x_1 + \sigma_t x_0, \quad t\in [0,T],
\end{align*}
where the time-dependent coefficients $\alpha_t$ and $\sigma_t$ are chosen so that
\begin{align*}
    \alpha_0 = 0,\quad \sigma_0 = 1,\quad \alpha_T = 1,\quad \sigma_T = 0.
\end{align*}
Thus, at $t=0$ the interpolated sample is purely the prior ($x_0$), and at $t=T$ it becomes a data sample ($x_1$).

The instantaneous change of $x$ is obtained by differentiating the interpolation:
\begin{align*}
    \frac{\d x_t}{\d t} = \frac{\d \alpha_t}{\d t} x_1 + \frac{\d \sigma_t}{\d t} x_0.
\end{align*}

The vector field is approximated by a neural network $V_t(x_t)$ with learnable parameters $\theta$. The FM training objective is then given by
\begin{align*}
    \mathcal{L}_\mathrm{FM}(\theta) := \E_{t\sim {\sf Uniform}[0,T], x_1 \sim \mathcal{D}} [\| V_t(x_t) - v_t(x_t) \|_2^2 ].
\end{align*}
This loss ensures that the learned velocity field $V_t(x_t)$ closely tracks the conditional dynamics $v_t(x_t)$ along the interpolation path.

After training, samples are generated by solving the ODE
\begin{align*}
    \frac{\d x_t}{\d t} = V_t(x_t),
\end{align*}
starting from an initial sample $x_0 \sim \N(0,I)$. Integrating this ODE from $t=0$ to $t=T$ yields a sample $x_T$ that approximates a draw from the target distribution. This ODE-based formulation offers a flexible and powerful framework for modeling complex data distributions while naturally incorporating conditional sampling.

\subsection{Background on Special Relativity} \label{sub:special_relativity}

We first introduce several essential ideas of special relativity \cite{e+05}.

\begin{definition}[Lorentz Factor]
\label{def:LorentzFactor}
According to special relativity~\cite{e+05}, the Lorentz factor at lab time $t$ is given by
\begin{align*}
\gamma_t := \frac{1}{\sqrt{1 - {\|v_t^{\rm lab}\|_2^2}/{c^2}}},
\end{align*}
where $v_t^{\rm lab}$ is the velocity at lab frame of reference, $c = 3 \times 10^8$ is the speed of light in vacuum.
\end{definition}

Then, we introduce the proper time of special relativity.

\begin{definition}[Proper Time]
\label{def:ProperTime}
The proper time is defined as the time interval measured in the rest frame of a moving object according to special relativity~\cite{e+05}. The differential form of the proper time is given by
\begin{align*}
    \d \tau = \frac{\d t}{\gamma_t},
\end{align*}
where $\d t$ is the time interval in the laboratory frame of reference, and $\gamma_t$ is the Lorentz factor at time lab time $t$ as defined in Definition~\ref{def:LorentzFactor}.
\end{definition}

Next, we define the force under special relativity here.

\begin{definition}[Relativistic Force]
\label{def:RelativisticForce}
In the framework of special relativity, the \emph{local force} (i.e., the force measured in the instantaneous rest frame of the particle) denoted as $f^{\rm local}$ has
\begin{align}
    f^{\rm local} := \frac{\d p^{\rm lab}}{\d \tau}, \label{eq:f_local}
\end{align}
where $p^{\rm lab}$ is the momentum at lab frame of reference, $\tau$ denotes the proper time defined in Definition~\ref{def:ProperTime}.

The momentum in the lab frame is defined as
\begin{align}
    p^{\rm lab} := m^{\rm lab} v_t^{\rm lab}, \label{eq:p}
\end{align}
where $m^{\rm lab}$ is the mass at lab frame of reference, and $v_t^{\rm lab}$ is the velocity at lab frame of reference.
\end{definition}

We state an equivalence lemma. Due to the space limitation, we delayed the proofs into the appendix.
\begin{lemma}[Equivalent Form of Relativistic Force, informal version of Lemma~\ref{lem:equiv_relativistic_force:formal}]\label{lem:equiv_relativistic_force:informal}
Let $p^{\rm lab}$ be the momentum defined in Eq.~\eqref{eq:p}, $\gamma_t$ be the Lorentz factor at lab time $t$ defined in Definition~\ref{def:LorentzFactor}, $\tau$ denotes the proper time, $v_t^{\rm lab} = \dot{x}_t$ denotes the velocity, 
$a_t^{\rm lab} = \ddot{x}_t$ denotes the acceleration.
The relativistic force, defined as the time derivative of the momentum in the lab frame, can be written as
\begin{align*}
f^{\rm local} =  m^{\rm lab}  (\gamma_t a_t^{\rm lab} + \gamma_t^3 \frac{ \langle v_t^{\rm lab}, a_t^{\rm lab} \rangle}{c^2} v_t^{\rm lab}).
\end{align*}

\end{lemma}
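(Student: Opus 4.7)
The plan is to expand the defining relations in Definition~\ref{def:RelativisticForce} and Definition~\ref{def:ProperTime} and then apply the chain and product rules, with the derivative of the Lorentz factor as the only nontrivial calculation.

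First, I would convert the proper-time derivative into a lab-time derivative. By Definition~\ref{def:ProperTime}, $\d\tau = \d t/\gamma_t$, so the chain rule gives $f^{\mathrm{local}} = \tfrac{\d p^{\mathrm{lab}}}{\d\tau} = \gamma_t \tfrac{\d p^{\mathrm{lab}}}{\d t}$. Substituting Eq.~\eqref{eq:p}, it remains to differentiate $m^{\mathrm{lab}} v_t^{\mathrm{lab}}$ with respect to $t$. The key conceptual point is that the lab-frame mass carries an implicit velocity (and hence time) dependence through the Lorentz factor, of the form $m^{\mathrm{lab}} = \gamma_t m_0$ for an invariant rest mass $m_0$, so the product rule produces two contributions, $\dot m^{\mathrm{lab}} v_t^{\mathrm{lab}} + m^{\mathrm{lab}} a_t^{\mathrm{lab}}$, neither of which vanishes.

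The main computational step is to evaluate $\dot\gamma_t$. Differentiating $\gamma_t = (1 - \|v_t^{\mathrm{lab}}\|_2^2/c^2)^{-1/2}$ via the chain rule, together with $\tfrac{\d}{\d t}\|v_t^{\mathrm{lab}}\|_2^2 = 2\langle v_t^{\mathrm{lab}}, a_t^{\mathrm{lab}}\rangle$, yields $\dot\gamma_t = \gamma_t^3\langle v_t^{\mathrm{lab}}, a_t^{\mathrm{lab}}\rangle/c^2$. I would then substitute this back, use $\dot m^{\mathrm{lab}} = m_0 \dot\gamma_t = (m^{\mathrm{lab}}/\gamma_t)\dot\gamma_t$ to eliminate $m_0$ in favor of $m^{\mathrm{lab}}$, and multiply through by the external $\gamma_t$ from the proper-time conversion. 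After collecting terms, the coefficient of $a_t^{\mathrm{lab}}$ becomes $m^{\mathrm{lab}}\gamma_t$ and the coefficient of $v_t^{\mathrm{lab}}$ becomes $m^{\mathrm{lab}}\gamma_t^3\langle v_t^{\mathrm{lab}}, a_t^{\mathrm{lab}}\rangle/c^2$, matching the statement of the lemma.

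There is no genuine conceptual obstacle: the proof is a bookkeeping exercise. The only point requiring care is tracking the powers of $\gamma_t$ picked up first when differentiating the Lorentz factor and then when rewriting $m_0$ in terms of $m^{\mathrm{lab}}$, so that the final expression is given purely in terms of $m^{\mathrm{lab}}$, $\gamma_t$, $v_t^{\mathrm{lab}}$, and $a_t^{\mathrm{lab}}$ with no residual dependence on the rest mass.
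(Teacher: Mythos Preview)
Your proposal is correct and follows essentially the same route as the paper: convert $\d/\d\tau$ to $\gamma_t\,\d/\d t$ via Definition~\ref{def:ProperTime}, then differentiate using $\dot\gamma_t=\gamma_t^3\langle v_t^{\rm lab},a_t^{\rm lab}\rangle/c^2$. The only bookkeeping difference is that the paper treats $m^{\rm lab}$ as a constant and differentiates the product $\gamma_t v_t^{\rm lab}$ directly, whereas you introduce an auxiliary rest mass $m_0$ with $m^{\rm lab}=\gamma_t m_0$ and apply the product rule to $m^{\rm lab} v_t^{\rm lab}$ before eliminating $m_0$; both choices collapse to the same expression.
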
 
\section{Force Matching} \label{sec:form}

In this section, we introduce Force Matching (ForM), a new architecture for generative models, and provide its theoretical analysis. In Section~\ref{sub:obj}, we introduce the training objective of Force Matching. Then, in Section~\ref{sub:samp_ode}. In Section~\ref{sub:speed_limit}, we illustrate and discuss the speed limitation of ForM. In Section~\ref{sub:form_trig}, We show the interpolation path for ForM.

\subsection{Definition of Force Matching Objective} \label{sub:obj}

Next, we define the training objective of Force Matching.

\begin{definition}[Force Matching Objective]
\label{def:FormObjective}
The training objective of Force Matching (ForM) is defined by
\begin{align*}
     \mathcal{L}_{\rm ForM}(\theta) := \E_{t \sim {\sf Uniform}[0,T], x_1 \sim \mathcal{D}} 
    [\| F_t(x_t) - f_t(x_t)\|_2^2],   
\end{align*}
where $\mathcal{D}$ is the target data distribution, $f_t(x_t)$ is the target relativistic force defined in Definition~\ref{def:RelativisticForce}, and $F_t(x)$ is a trainable neural network parameterized with $\theta$.
\end{definition}

\subsection{Our Result I: Sampling ODE} \label{sub:samp_ode}

We define an ordinary differential equation (ODE) in order to get the position based on a given relativistic force. 

\begin{theorem}[Sampling ODE, informal version of Theorem~\ref{thm:ode_form:formal}]\label{thm:ode_form:informal}
    Giving the force at position $x_t$ denoted as $f_t(x_t)$, we could solve for ForM sampling path $x_t$ by the following ODE
    \begin{align*}
    \ddot{x}_t = \frac{1}{m^{\rm lab} \gamma_t}(f_t^{\rm local} - \frac{\langle v_t^{\rm lab}, f_t^{\rm local} \rangle}{c^2} v_t^{\rm lab}),
    \end{align*}
    where $x_0 \sim \N(0,I)$, $\dot{x}_0 = 0$.
\end{theorem}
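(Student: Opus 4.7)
The plan is to derive the stated ODE by algebraically inverting the relativistic force law of Lemma~\ref{lem:equiv_relativistic_force:informal}, which expresses $f^{\rm local}$ as a \emph{linear} function of the acceleration $a_t^{\rm lab} = \ddot{x}_t$ with an additional term parallel to $v_t^{\rm lab} = \dot{x}_t$. Since the theorem states the inverse relation (solving for $\ddot{x}_t$ given $f_t^{\rm local}$), the task reduces to a pure linear-algebra inversion of the map $a \mapsto m^{\rm lab}\gamma_t a + m^{\rm lab}\gamma_t^3 \tfrac{\langle v_t^{\rm lab},a\rangle}{c^2} v_t^{\rm lab}$.

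First I would take the inner product of both sides of the equivalent form in Lemma~\ref{lem:equiv_relativistic_force:informal} with $v_t^{\rm lab}$. This produces a scalar equation of the form $\langle v_t^{\rm lab}, f^{\rm local}\rangle = m^{\rm lab}\gamma_t \langle v_t^{\rm lab}, a_t^{\rm lab}\rangle\bigl(1 + \gamma_t^2 \|v_t^{\rm lab}\|_2^2/c^2\bigr)$. The key algebraic identity, which follows immediately from Definition~\ref{def:LorentzFactor}, is $1 + \gamma_t^2 \|v_t^{\rm lab}\|_2^2/c^2 = \gamma_t^2$, since $\gamma_t^{-2} = 1 - \|v_t^{\rm lab}\|_2^2/c^2$. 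This collapses the factor to $\gamma_t^3$ and yields the clean identity $\langle v_t^{\rm lab}, a_t^{\rm lab}\rangle = \langle v_t^{\rm lab}, f^{\rm local}\rangle / (m^{\rm lab}\gamma_t^3)$.

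Next I would substitute this expression for $\langle v_t^{\rm lab}, a_t^{\rm lab}\rangle$ back into the formula from Lemma~\ref{lem:equiv_relativistic_force:informal}. The $m^{\rm lab}\gamma_t^3$ in the coefficient cancels exactly with the $m^{\rm lab}\gamma_t^3$ in the denominator, leaving $f^{\rm local} = m^{\rm lab}\gamma_t a_t^{\rm lab} + \tfrac{\langle v_t^{\rm lab}, f^{\rm local}\rangle}{c^2} v_t^{\rm lab}$. Solving for $a_t^{\rm lab} = \ddot{x}_t$ and writing $f_t^{\rm local}$ for $f^{\rm local}$ produces precisely the ODE in the theorem statement. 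The initial conditions $x_0 \sim \mathcal{N}(0,I)$ and $\dot{x}_0 = 0$ are part of the ForM construction (sampling starts from the prior at rest), not a derivation step.

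The main ``obstacle'' is really just bookkeeping: one must recognize that the parallel-to-$v_t^{\rm lab}$ component is the only direction in which the map $a \mapsto f^{\rm local}$ differs from a simple scalar multiplication by $m^{\rm lab}\gamma_t$, so inversion amounts to decoupling that one scalar degree of freedom. The Lorentz-factor identity $\gamma_t^{-2} + \|v_t^{\rm lab}\|_2^2/c^2 = 1$ is what makes the coefficient on the parallel part collapse to exactly $\gamma_t^3$ and enables the cancellation; without it, one would obtain a more cumbersome rational expression. No further analysis, fixed-point argument, or existence theory is needed for the ODE itself beyond this inversion.
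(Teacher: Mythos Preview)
Your proposal is correct and arrives at exactly the stated ODE. The route you take is slightly different from the paper's, though both exploit the same underlying structure and the same Lorentz identity $1+\gamma_t^2\|v_t^{\rm lab}\|_2^2/c^2=\gamma_t^2$. The paper explicitly decomposes both $a_t^{\rm lab}$ and $f_t^{\rm local}$ into components parallel and perpendicular to $v_t^{\rm lab}$, solves $a_{t,\perp}^{\rm lab}=f_{t,\perp}^{\rm local}/(m^{\rm lab}\gamma_t)$ and $a_{t,\parallel}^{\rm lab}=f_{t,\parallel}^{\rm local}/\bigl(m^{\rm lab}(\gamma_t+\gamma_t^3\|v\|_2^2/c^2)\bigr)$ separately, and then recombines and simplifies. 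Your approach is a bit more economical: by taking a single inner product with $v_t^{\rm lab}$ you extract the scalar $\langle v_t^{\rm lab},a_t^{\rm lab}\rangle$ directly and substitute it back, avoiding the explicit two-component bookkeeping. Both arguments are equally valid; yours is essentially a Sherman--Morrison-style inversion of a rank-one perturbation of a scalar map, while the paper's is the equivalent geometric decomposition.
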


Theorem~\ref{thm:ode_form:informal} shows how to derive the position $x_t$ from the relativistic force field $f_t(x_t)$. Unlike first-order flow-based methods, ForM naturally involves a second-order ODE because it encodes the evolution of both position and velocity under relativistic constraints. This allows for more expressive and physically-motivated sampling trajectories, where velocity constraints can help stabilize the generative process. Once a neural network $F_t(x)$ is trained to approximate $f_t(x)$, the sampling procedure integrates this second-order ODE to produce samples consistent with the target distribution.

\subsection{Our Result II: Speed Limit} \label{sub:speed_limit}

One property of relativistic mechanics is the velocity will always be under the constant $c$, which is the speed of light.

In reality, the speed of light is $c \approx 3 \times 10^8$. For any $v_t$, the speed $ \|v_t\|_2$ can approach but never exceed $c$. 
This property stabilizes the generating process. We formalize and prove this in Theorem~\ref{thm:vel:informal}.

\begin{theorem}[Speed Limit, informal version of Theorem~\ref{thm:vel:formal}] \label{thm:vel:informal}
For a ForM model with sampling path $x : [0,T) \to \R^n$, the velocity satisfies
\begin{align*}
\| \dot{x}_t \|_2 < c \quad, \forall t \in [0,T).
\end{align*}
\end{theorem}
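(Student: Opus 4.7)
The plan is to control the evolution of the Lorentz factor $\gamma_t$ along the sampling trajectory and argue that $\gamma_t$ cannot blow up in finite time, which is equivalent to $\|\dot{x}_t\|_2 < c$. The central identity I would derive is the relativistic work--energy relation $\dot{\gamma}_t = \langle \dot{x}_t, f_t^{\rm local}\rangle / (m^{\rm lab} c^2)$, whose right-hand side stays bounded whenever the driving force does.

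First, I would differentiate the definition of $\gamma_t$ from Definition~\ref{def:LorentzFactor} via the chain rule to obtain
\begin{align*}
\dot{\gamma}_t = \frac{\gamma_t^3}{c^2}\, \langle \dot{x}_t, \ddot{x}_t \rangle.
\end{align*}
Then I would substitute the sampling ODE from Theorem~\ref{thm:ode_form:informal} into the inner product $\langle \dot{x}_t, \ddot{x}_t\rangle$, expand the perpendicular projection, and collapse the coefficient using the elementary identity $1 - \|\dot{x}_t\|_2^2/c^2 = \gamma_t^{-2}$. The cubic powers of $\gamma_t$ cancel, yielding the clean relation
\begin{align*}
\dot{\gamma}_t = \frac{\langle \dot{x}_t, f_t^{\rm local} \rangle}{m^{\rm lab}\, c^2}.
\end{align*}

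Next, I would set up a bootstrap / maximal-interval argument. By Cauchy--Schwarz together with the observation that $\gamma_s$ being finite forces $\|\dot{x}_s\|_2 < c$, one obtains $|\dot{\gamma}_s| \le \|f_s^{\rm local}\|_2 / (m^{\rm lab}\, c)$ on any sub-interval of $[0,T)$ where $\gamma$ stays finite. Now define $t^* := \sup\{\, t \in [0,T) \,:\, \gamma_s < \infty \text{ for all } s \in [0,t)\,\}$. Using the initial condition $\dot{x}_0 = 0$, so $\gamma_0 = 1$, and integrating, I would conclude
\begin{align*}
\gamma_t \;\le\; 1 + \frac{1}{m^{\rm lab}\, c} \int_0^t \|f_s^{\rm local}\|_2 \, \d s \qquad \text{for all } t < t^*.
\end{align*}
Provided $f^{\rm local}$ is locally integrable along the trajectory, the right-hand side stays finite as $t \to t^*$, contradicting the definition of $t^*$ whenever $t^* < T$. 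Hence $t^* = T$, so $\gamma_t < \infty$, i.e.\ $\|\dot{x}_t\|_2 < c$, for every $t \in [0,T)$.

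The main obstacle I anticipate is closing the bootstrap without circular reasoning: the bound on $\dot{\gamma}_t$ uses $\|\dot{x}_t\|_2 \le c$, which is itself a consequence of $\gamma_t$ being finite. Phrasing the argument on the maximal interval where $\gamma_t$ remains finite --- rather than on all of $[0,T)$ at once --- sidesteps this trap. A secondary technicality is the mild regularity assumption on $f^{\rm local}$ (local boundedness or at least integrability along the trajectory), which is benign in the ForM setting since $f^{\rm local}$ is produced by a trained neural network evaluated along a continuous curve.
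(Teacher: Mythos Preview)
Your argument is correct, and the core computation---substituting the sampling ODE into $\langle \dot{x}_t,\ddot{x}_t\rangle$ and simplifying via $1-\|\dot{x}_t\|_2^2/c^2=\gamma_t^{-2}$---matches what the paper does. The route to the conclusion, however, is different. The paper tracks $X(t)=\tfrac12\|v_t^{\rm lab}\|_2^2$ and shows (Lemma~\ref{lem:velocity_derivative}) that
\[
\frac{\d X}{\d t}=\frac{\langle f_t^{\rm local},v_t^{\rm lab}\rangle}{m^{\rm lab}\gamma_t}\Bigl(1-\frac{\|v_t^{\rm lab}\|_2^2}{c^2}\Bigr),
\]
then argues via a barrier: the factor $1-\|v\|_2^2/c^2$ vanishes at $\|v\|_2=c$ and turns negative beyond it, so $\|v\|_2^2$ cannot pass through $c^2$. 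You instead track $\gamma_t$ itself, derive the relativistic work--energy relation $\dot\gamma_t=\langle \dot{x}_t,f_t^{\rm local}\rangle/(m^{\rm lab}c^2)$, and close via an integral bound on a maximal interval. Your route is more quantitative (it yields an explicit a priori bound $\gamma_t\le 1+\tfrac{1}{m^{\rm lab}c}\int_0^t\|f_s^{\rm local}\|_2\,\d s$) and makes the needed regularity on $f^{\rm local}$ explicit; the paper's barrier argument is shorter but is less careful about ruling out that $\|v\|_2$ reaches $c$ in finite time, which your bootstrap handles cleanly.
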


It shows that the sample velocity remains strictly below $c$ at all times under relativistic constraints. Practically, this upper bound on velocity helps mitigate risks of numerical instability or ``exploding gradients" that can sometimes arise in diffusion- or flow-based generative models. By capping the speed of samples, ForM maintains a controlled and stable evolution in high-dimensional spaces. This provides a theoretical guarantee of safety against runaway behaviors, making the sampling process more robust.

\subsection{Our Result III: ForM with TrigFlow} \label{sub:form_trig}

The interpolation path of ForM is given by the following theorem.

\begin{theorem}[ForM with TrigFlow, informal version of Theorem~\ref{thm:form_trig:formal}] \label{thm:form_trig:informal}
We let $m = 1$ for simplicity in ForM. Giving a the interpolation $x_t = \alpha_t x_1 + \sigma_t x_0$, where $\alpha_T = 1$, $\alpha_0 = 0$, $\sigma_T = 0$, $\sigma_0 = 1$. We let $F_t(x_t)$ denote a vector map of force, a trainable neuron network parameterized with $\theta$. We select the $\alpha_t$ and $\sigma_t$ identical with TrigFlow \cite{ls24}, where $\alpha_t = \sin(t)$ and $\sigma_t = \cos(t)$, $T = \frac{\pi}{2}$. Then, force interpolation could be simplified to 
\begin{align*}
    f_t(x_t) =
    & ~ \frac{(\cos(t)x_1 - \sin(t)x_0) \cdot (-\sin(t)x_1 - \cos(t)x_0)}{c^2 - (\cos(t)x_1 - \sin(t)x_0)^2}\\
    & ~ (\cos(t)x_1 - \sin(t)x_0)).
\end{align*}
\end{theorem}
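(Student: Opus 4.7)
The plan is to specialize the equivalent form of the relativistic force from Lemma~\ref{lem:equiv_relativistic_force:informal} to the TrigFlow interpolation path and then reduce the resulting expression to the stated three-factor product. Because the right-hand side in the statement involves only the velocity vector and the squared velocity norm, the simplification should rest on the fact that, under the trigonometric parameterization, both the velocity and the acceleration are explicit linear combinations of $x_0$ and $x_1$, with a particularly clean algebraic relation between them.

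First I would carry out the direct differentiation. With $m^{\rm lab}=1$, $\alpha_t=\sin(t)$, $\sigma_t=\cos(t)$, the interpolant $x_t = \sin(t)\,x_1 + \cos(t)\,x_0$ gives $v_t^{\rm lab} = \dot x_t = \cos(t)\,x_1 - \sin(t)\,x_0$ and $a_t^{\rm lab} = \ddot x_t = -\sin(t)\,x_1 - \cos(t)\,x_0 = -x_t$. The last identity is a distinctive feature of TrigFlow and will be what drives the simplification; it also matches exactly the second factor appearing in the theorem's right-hand side.

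Next I would expand the Lorentz factor. From Definition~\ref{def:LorentzFactor}, $\gamma_t^2 = c^2/(c^2 - \|v_t^{\rm lab}\|_2^2) = c^2/(c^2 - (\cos(t)\,x_1 - \sin(t)\,x_0)^2)$, which produces the exact denominator written in the theorem. Substituting this $\gamma_t$, together with the explicit $v_t^{\rm lab}$ and $a_t^{\rm lab}$, into the expression $f_t^{\rm local} = m^{\rm lab}\bigl(\gamma_t\,a_t^{\rm lab} + \gamma_t^3\,\langle v_t^{\rm lab}, a_t^{\rm lab}\rangle\,v_t^{\rm lab}/c^2\bigr)$ from Lemma~\ref{lem:equiv_relativistic_force:informal}, and then collecting powers of $\gamma_t$ and $c^2$, should leave only the factored form in the theorem: the scalar $\langle v_t^{\rm lab}, a_t^{\rm lab}\rangle$ times the velocity vector, divided by $c^2 - \|v_t^{\rm lab}\|_2^2$.

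The main obstacle is the algebraic bookkeeping in this last step: the powers of $c$ and $\gamma_t$ have to line up exactly, and one must verify how the $\gamma_t\,a_t^{\rm lab}$ term combines with $\gamma_t^3\,\langle v_t^{\rm lab}, a_t^{\rm lab}\rangle\,v_t^{\rm lab}/c^2$ to produce a single scalar multiple of $v_t^{\rm lab}$. If this reduction does not close directly as a single vector identity, it would suggest that the argument should instead decompose the relativistic force into its parallel and perpendicular components using the $\parallel$ and $\perp$ notation introduced in Section~\ref{sub:notation}, and identify the stated expression as the component of the force along the velocity direction. No additional analytic ingredients beyond the TrigFlow differentiation identities from Step 1 should be required.
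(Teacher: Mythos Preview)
Your core approach matches the paper exactly: start from the equivalent form $f_t = \gamma_t\,\ddot x_t + \gamma_t^3\,\langle \dot x_t,\ddot x_t\rangle\,\dot x_t/c^2$ from Lemma~\ref{lem:equiv_relativistic_force:informal}, differentiate $x_t=\sin(t)x_1+\cos(t)x_0$ to get $\dot x_t=\cos(t)x_1-\sin(t)x_0$ and $\ddot x_t=-\sin(t)x_1-\cos(t)x_0$, and substitute. The paper's proof is literally those two steps and nothing more.

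Where you go off track is in anticipating that the $\gamma_t\,\ddot x_t$ term and the $\gamma_t^3\,\langle \dot x_t,\ddot x_t\rangle\,\dot x_t/c^2$ term will merge into a single scalar multiple of $\dot x_t$. They do not: $\ddot x_t=-x_t$ and $\dot x_t$ are linearly independent combinations of $x_0,x_1$ in general, so no such collapse is possible. The paper's own proof in fact arrives at a \emph{two}-term expression,
\[
\Bigl(1-\tfrac{(\cos(t)x_1-\sin(t)x_0)^2}{c^2}\Bigr)^{-1/2}(-\sin(t)x_1-\cos(t)x_0)\;+\;\text{(the three-factor product in the statement)},
\]
and the theorem as stated has simply dropped the first summand. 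So your ``main obstacle'' is not an obstacle in your reasoning but an artifact of an incomplete statement; the fallback to a $\parallel/\perp$ decomposition is unnecessary. Just substitute and keep both terms.
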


Theorem~\ref{thm:form_trig:informal} highlights how the ForM framework can be directly coupled with trigonometric interpolation paths. By choosing $\alpha_t$ and $\sigma_t$ as sine and cosine, respectively, we obtain a closed-form expression for the relativistic force that governs the sample evolution. This synergy suggests that ForM can not only unify different flow-based or diffusion-based models but also inherit the continuous-time advantages of the TrigFlow parameterization. Consequently, one can design more flexible interpolation strategies while still enjoying the stability benefits of relativistic velocity constraints.

\section{Experiment} \label{sec:exp}

\begin{algorithm}[!ht]
\caption{First-order Training, \cite{lgl22}}
\label{alg:first_order_training}
\begin{algorithmic}[1]
\Procedure{1stTraining}{$\theta, D$}
\State \Comment{Parameter $\theta$ for the model $u_1$}
\State \Comment{Training dataset $D$}
\While{not converged}
\State {\color{blue} /* Sample a trajectory from dataset. */ }
\State {\color{blue} /*$x_t$ denotes the position at time $t$. */ }
\State {\color{blue} /*$\dot x_t$ denotes the velocity at time $t$. */ }
\State $\{x_t\}_{t \in [0, 1]}, \{\dot x_t\}_{t \in [0, 1]} \sim D$
\State {\color{blue} /* Random sample a timestep $t$. */ }
\State $t \sim \mathcal{U}(0, 1)$
\State {\color{blue} /* Update model parameter $\theta$. */ }
\State $\theta \gets \nabla_\theta ( \| u_1(x_t, t) - \dot x_t \|_2^2)$
\EndWhile
\State \Return{$\theta$}
\EndProcedure
\end{algorithmic}
\end{algorithm}

\begin{algorithm}
[!ht]
\caption{First-order Sampling, \cite{lgl22}}
\label{alg:first_order_sampling}
\begin{algorithmic}[1]
\Procedure{1stdSampling}{$\theta, M$}
\State \Comment{Parameter $\theta$ for the model $u_1$}
\State \Comment{The number of sampling steps $M$}
\State $x \sim \N (0, I)$
\State $d \gets 1 / M$
\State $t \gets 0$
\For{$n \in [0, \dots, M - 1]$}

\State $x \gets x + d \cdot u_1 (x, t)$
\State $t \gets t + d$
\EndFor
\State \textbf{return} $x$
    \EndProcedure
\end{algorithmic}
\end{algorithm}

\begin{algorithm}[!ht]
\caption{First and Second-order Training}
\label{alg:first_second_order_training}
\begin{algorithmic}[1]
\Procedure{2ndTraining}{$\theta, D$}
\State \Comment{Parameter $\theta$ for the models $u_1$ and $u_2$} 
\State \Comment{Training dataset $D$}
\While{not converged}
\State {\color{blue} /* Random sample a trajectory from dataset. */ }
\State {\color{blue} /*$x_t$ denotes the position at time $t$. */ }
\State {\color{blue} /*$\dot x_t$ denotes the velocity at time $t$. */ }
\State {\color{blue} /*$\ddot x_t$ denotes the acceleration at time $t$. */ }
\State $\{x_t\}_{t \in [0, 1]}, \{\dot x_t\}_{t \in [0, 1]}, \{\ddot x_t\}_{t \in [0, 1]} \sim D$
\State {\color{blue} /* Random sample a timestep $t$. */ }
\State $t \sim \mathcal{U}(0, 1)$
\State {\color{blue} /* Update model parameter $\theta$. */ }
\State $\theta \gets \nabla_\theta ( \| u_1(x_t, t) - \dot x_t \|_2^2$
\Statex \hspace{4.2em} $ + \| u_2 (u_1 (x_t, t), x_t, t) - \ddot x_t \|_2^2)$
\EndWhile
\State \Return{$\theta$}
\EndProcedure
\end{algorithmic}
\end{algorithm}

\begin{algorithm}
[!ht]
\caption{First and Second-order Sampling}
\label{alg:first_second_order_sampling}
\begin{algorithmic}[1]
\Procedure{2ndSampling}{$\theta, M$}
\State \Comment{Parameter $\theta$ for the models $u_1$ and $u_2$}
\State \Comment{The number of sampling steps $M$}
\State $x \sim \N (0, I)$
\State $d \gets 1 / M$
\State $t \gets 0$
\For{$n \in [0, \dots, M - 1]$}

\State $x \gets x + d \cdot u_1 (x, t) + \frac{d^2}{2} \cdot u_2 (u_1 (x, t), x, t)$
\State $t \gets t + d$
\EndFor
\State \textbf{return} $x$
    \EndProcedure
\end{algorithmic}
\end{algorithm}

\begin{algorithm}[!ht]
\caption{ForM Training, (Ours)}
\label{alg:form_training}
\begin{algorithmic}[1]
\Procedure{ForMTraining}{$\theta, D, p, k$}
\State \Comment{Parameter $\theta$ for the ForM model $F_\theta$}
\State \Comment{Training dataset $D$}
\State \Comment{Stepsize and time index distribution $p$}
\State \Comment{Batch size $k$}
\While{not converged}
\State {\color{blue} /* Random sample a trajectory from dataset. */ }
\State {\color{blue} /*$x_t$ denotes the position at time $t$. */ }
\State {\color{blue} /*$f_t$ denotes the Lorentz force at time $t$. */ }
\State $\{x_t\}_{t \in [0, 1]}, \{ f_t \}_{t \in [0, 1]} \sim D$
\State {\color{blue} /* Random sample a timestep $t$. */ }
\State $t \sim \mathcal{U}(0, 1)$
\State {\color{blue} /* Update model parameter $\theta$. */ }
\State $\theta \gets \nabla_\theta ( \| F_\theta(t) - f_t \|_2^2)$
\EndWhile
\State \Return{$\theta$}
\EndProcedure
\end{algorithmic}
\end{algorithm}

\begin{algorithm}
[!ht]
\caption{ForM Sampling, (Ours)}
\label{alg:form_sampling}
\begin{algorithmic}[1]
\Procedure{ForMSampling}{$\theta, M$}
\State \Comment{Parameter $\theta$ for the ForM model $F_\theta$}
\State \Comment{The number of sampling steps $M$}
\State $x \sim \N (0, I)$
\State $d \gets 1 / M$
\State $t \gets 0$
\For{$n \in [0, \dots, M - 1]$}
\State {\color{blue}/* 
Get Lorentz force $f_L$ via model $F_\theta$. */} 
\State $f_L \gets F_\theta(t)$
\State {\color{blue}/* 
Calculate acceleration $a$ via Theorem~\ref{thm:ode_form:informal}. */} 
\State $a \gets \phi (v_\mathrm{prev}, f_L)$ 
\State {\color{blue}/* 
Calculate velocity $v$. */} 
\State $v \gets v_\mathrm{prev} + d 
~ a_t$
\State {\color{blue}/* 
Calculate $x$. */} 
\State $x \gets x_\mathrm{prev} + d ~ (v_t + v_{\mathrm{prev}}) / 2$
\State {\color{blue}/* 
Update $t$. */} 
\State $t \gets t + d$
\State {\color{blue}/* Store variables for next iteration. */}
\State $x_{\mathrm{prev}} \gets x$
\State $v_{\mathrm{prev}} \gets v$
\EndFor
\State \textbf{return} $x$
    \EndProcedure
\end{algorithmic}
\end{algorithm}

\begin{algorithm}
[!ht]
\caption{Numerical ODE Solver}
\begin{algorithmic}[1]
\Procedure{NumericalODESolver}{$\theta, M$}
\State \Comment{Parameter $\theta$ for the ForM model $F_t$}
\State \Comment{The number of sampling steps $M$}
\State $x \sim \N (0, I)$
\State $d \gets 1 / M$
\State $t \gets 0$
\For{$n \in [0, \dots, M - 1]$}
\State {\color{blue}/* Solve the ODE in Theorem~\ref{thm:ode_form:informal}. */} 
\State $x \gets \mathrm{ODE Solver}(x, \frac{\d x}{\d t}, F(t), d)$ 
\State $t \gets t + d$
\EndFor
\State \textbf{return} $x$
    \EndProcedure
\end{algorithmic}
\end{algorithm}

In this section, we conduct a systematic evaluation of Force Matching (ForM) effectiveness through extensive experimentation, emphasizing its significant role in advancing distribution generation. Section~\ref{sec:exp:experiment_setup} describes the experimental framework, encompassing the Onedot, Halfmoons, and Spiral datasets, force dynamics, and a comparative analysis of various trajectory evolution methods. Section~\ref{sec:exp:result_anapysis} examines qualitative outcomes, demonstrating how second-order terms refine trajectory smoothness and highlighting ForM’s superiority in transport dynamics modeling. 

\subsection{Experiment setup} \label{sec:exp:experiment_setup}
For an in-depth trajectory evolution analysis, we assess three approaches: the standard flow matching~\cite{lcb+22} method utilizing only the first-order term, an improved approach integrating both first-order and second-order terms, and our proposed ForM model on three kinds of complex and challenging datasets.

\paragraph{Datasets.} (1) Onedot dataset: As illustrated in Figure~\ref{fig:onedot_dataset}, the Onedot dataset comprises 200 points sampled from a Gaussian distribution with variance $0.3$ to establish the central source distribution. The target distribution is generated via a Lorentz field, where each point's initial velocity matches its initial position vector. The parallel force is defined as $\gamma^3 m_0 a_x$, while the perpendicular force follows $\gamma m_0 a_x$, where $\gamma = (1 - v^2 / c^2)^{-\frac{1}{2}}$, and the speed of light is set to \(c = 3 \times 10^8\,\text{m/s}\). In the Onedot dataset, both forces are set to $1.5 \times 10^8$, with a duration of $1s$. 

(2) Halfmoons dataset:
Then, as illustrated in Figure~\ref{fig:halfmoons_dataset}, we sampled 1000 dots from a central source distribution, and the target distribution is generated by a Lorentz field. For the dots above the $x$-axis, we set the direction of their velocity parallel to the $x$-axis to the right, and the dots below the $x$-axis are parallel to the $x$-axis to the left. And the parallel force denoted as $1 \times 10^7 \cdot \sin( t )$, the perpendicular force follows $7 \times 10^8 \cdot \sin( 8 t )$. 

(3) Spiral dataset: 
With reference to the Spiral dataset as shown in Figure~\ref{fig:spiral_dataset}, 1000 data points were extracted from a central source distribution, and the target distribution was similarly synthesized via a Lorentz field. In this instance, the source distribution was segmented into two components—a circular core and an annular ring—by equally partitioning the original radius. The initial velocity assigned to each point was determined by both its designated segment and the angular coordinate of its starting position. As a result, points residing in the ring attain comparatively greater initial speeds, and those with larger angular coordinates likewise manifest enhanced velocities. Finally, the parallel force is specified by $1 \times 10^7 \cdot \sin( t )$, while the perpendicular force is given by $7 \times 10^8 \cdot \sin( t )$.

\paragraph{Baselines.} We compare our ForM model with two traditional yet powerful baselines. The first baseline is standard flow matching, which models the transfer trajectory between distributions using the first-order velocity of the trajectory. We denote this method as O1 (Algorithm~\ref{alg:first_order_training} and \ref{alg:first_order_sampling}). The second baseline is an improved version of the first-order approach, incorporating both first-order velocity and second-order acceleration to model the trajectory, which we denote as O1+O2 (Algorithm~\ref{alg:first_second_order_training} and \ref{alg:first_second_order_sampling}). Both baselines rely on modeling the higher-order derivatives of the trajectory. In contrast, our ForM model (Algorithm~\ref{alg:form_training} and \ref{alg:form_sampling}) introduces a novel approach by modeling the Lorentz force acting on the trajectory within a field, offering a fundamentally different perspective on learning the target distribution. This represents a new modeling paradigm that extends beyond conventional flow-based methods.

\begin{figure}[!ht]
\centering
\includegraphics[width=0.235\textwidth]{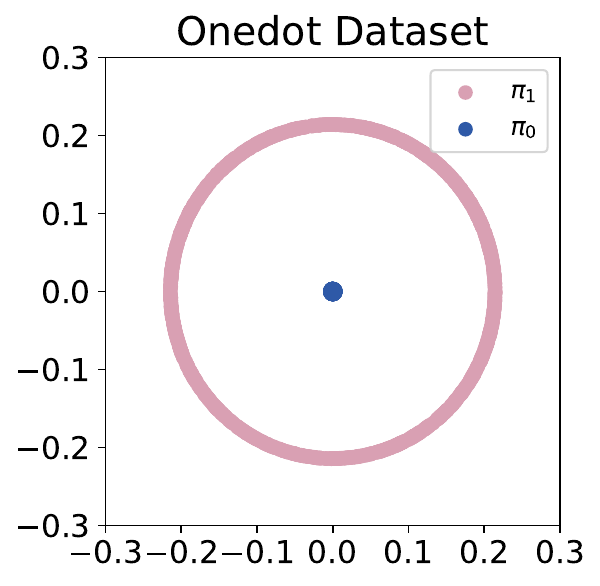} 
\includegraphics[width=0.235\textwidth]{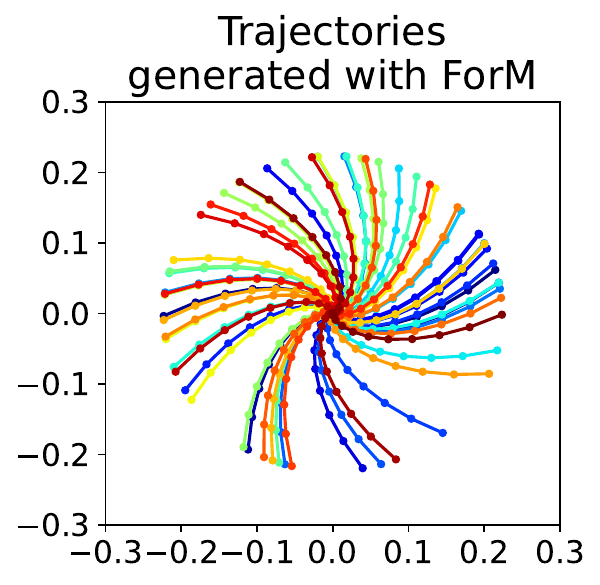}
\caption{Left: Onedot Dataset. The objective is to train the ForM model to learn a transport trajectory from distribution $\pi_0$ ({\textbf{blue}}) to distribution $\pi_1$ ({\textbf{pink}}). Right: The transportation trajectory generated by the ForM model.}
\label{fig:onedot_dataset}
\ifdefined\isarxiv
\else
\Description{}
\fi
\end{figure}

\begin{figure}[!ht]
\centering
\includegraphics[width=0.235\textwidth]{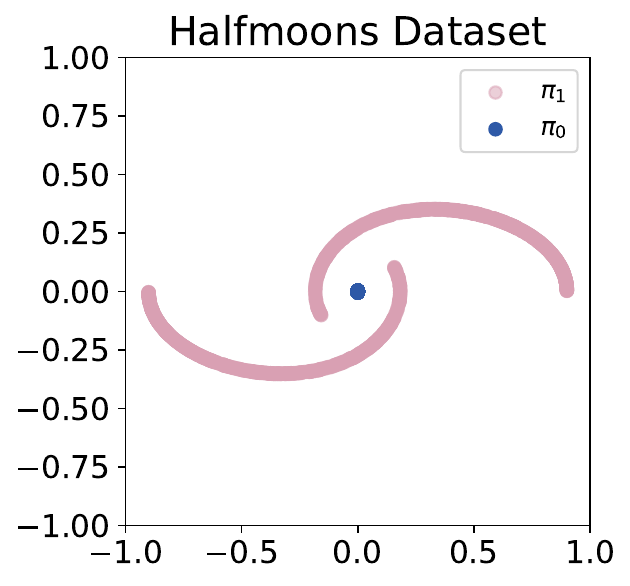}
\includegraphics[width=0.235\textwidth]{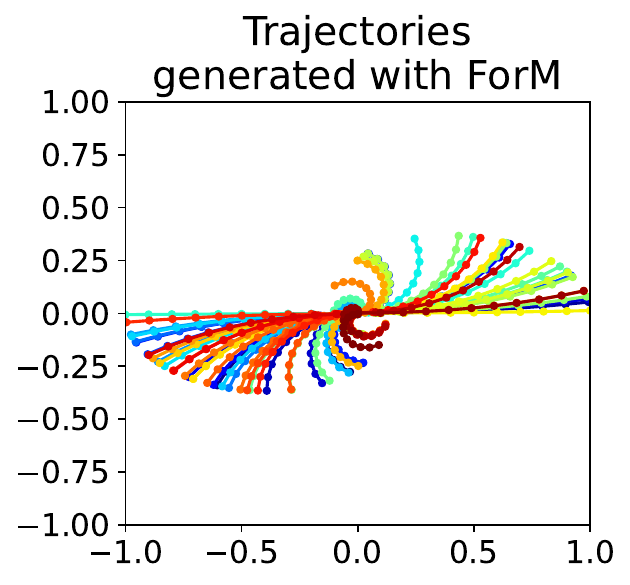} 
\caption{Left: Halfmoons Dataset. The objective is to train ForM to learn a transport trajectory from distribution $\pi_0$ ({\textbf{blue}}) to distribution $\pi_1$ ({\textbf{pink}}). Right: The transportation trajectory generated by the ForM model.}
\label{fig:halfmoons_dataset}
\ifdefined\isarxiv
\else
\Description{}
\fi
\end{figure}

\begin{figure}[!ht]
\centering
\includegraphics[width=0.235\textwidth]{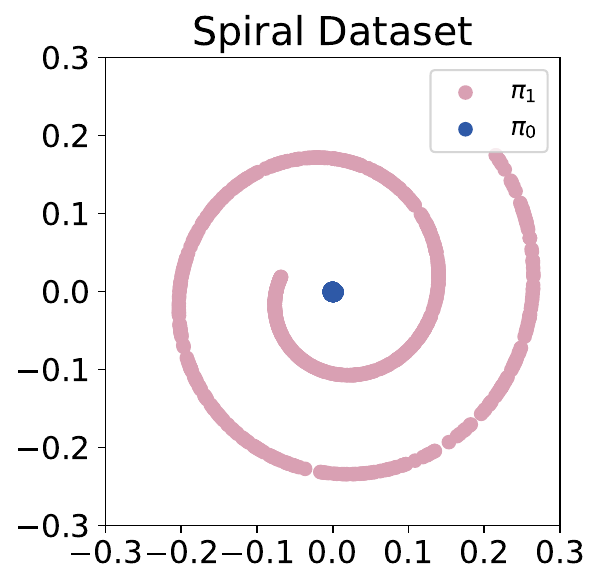}
\includegraphics[width=0.235\textwidth]{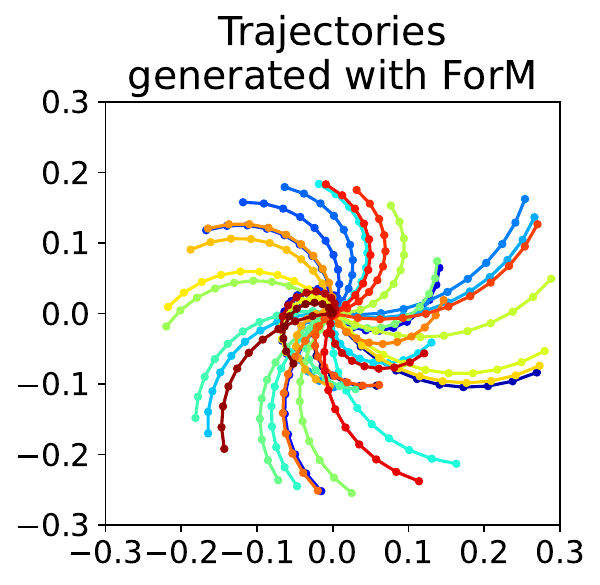}
\caption{Left: Spiral Dataset. The objective is to train the ForM model to learn a transport trajectory from distribution $\pi_0$ ({\textbf{blue}}) to distribution $\pi_1$ ({\textbf{pink}}). Right: The transportation trajectory generated by the ForM model.}
\label{fig:spiral_dataset}
\ifdefined\isarxiv
\else
\Description{}
\fi
\end{figure}

\begin{figure*}[!ht]
\centering
\subfloat[O1, \cite{lgl22}]{\includegraphics[width=0.3\textwidth]{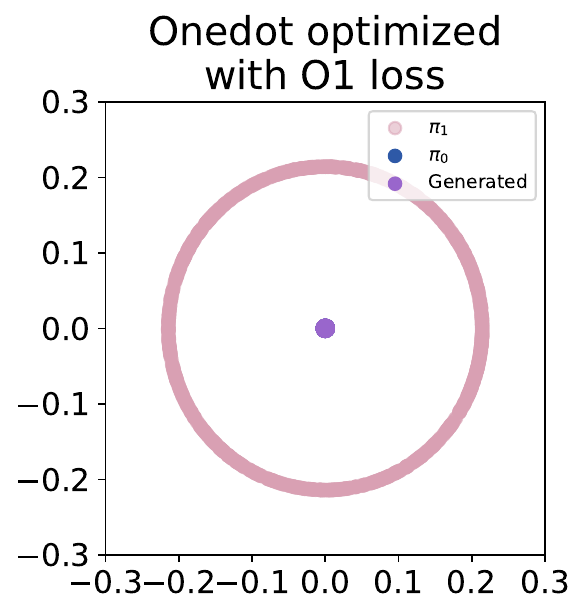}}
\subfloat[O1+O2]{\includegraphics[width=0.3\textwidth]{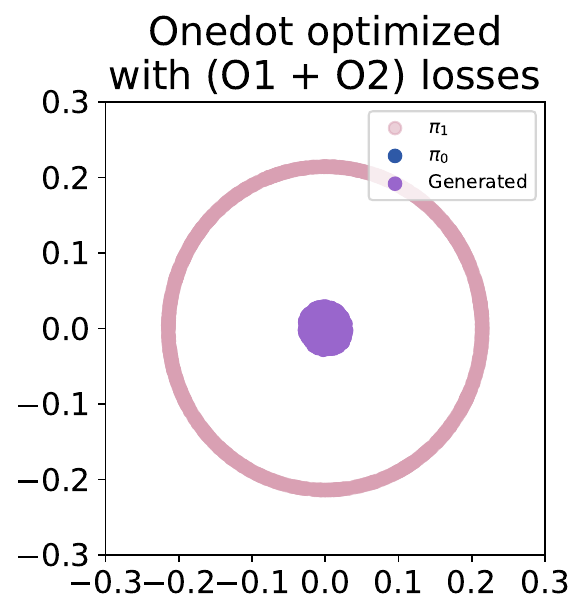}}
\subfloat[ForM, (Ours)]{\includegraphics[width=0.3\textwidth]{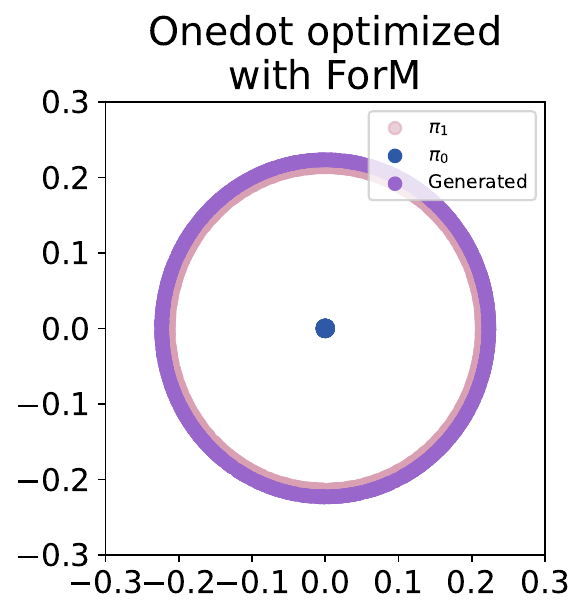}}
\\
\subfloat[O1, \cite{lgl22}]{\includegraphics[width=0.3\textwidth]{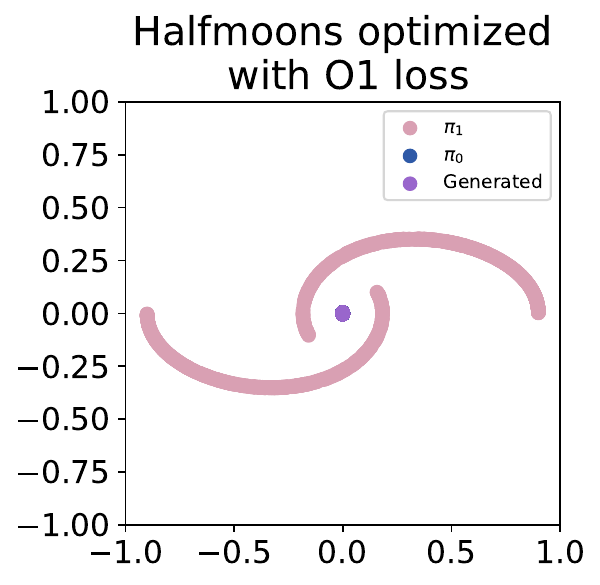}}
\subfloat[O1+O2]{\includegraphics[width=0.3\textwidth]{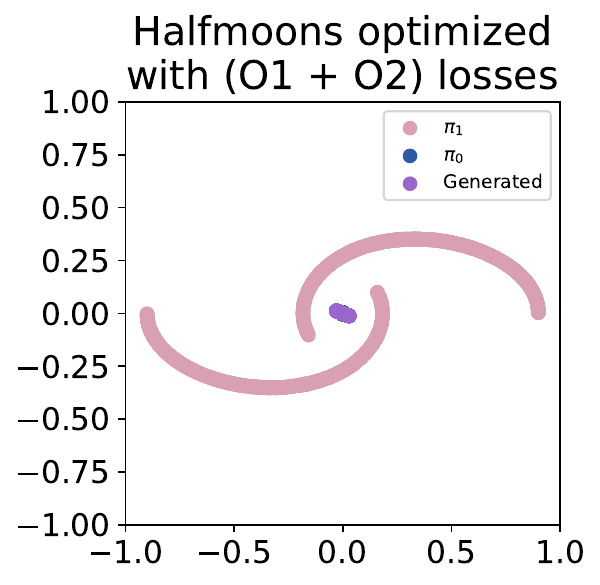}}
\subfloat[ForM, (Ours)]{\includegraphics[width=0.3\textwidth]{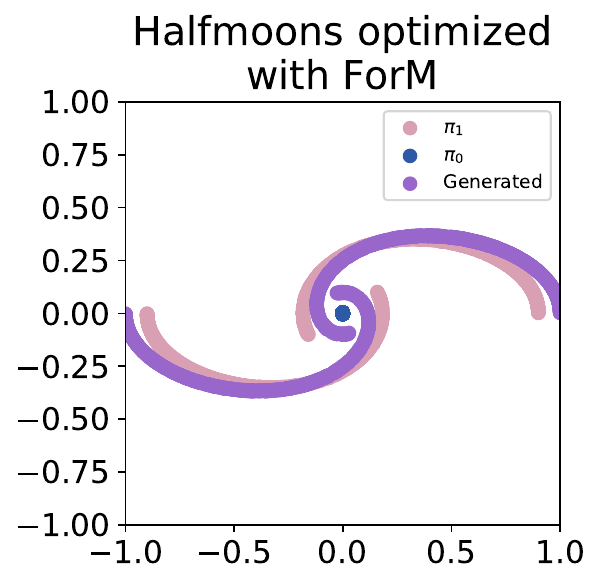}} \\
\subfloat[O1, \cite{lgl22}]{\includegraphics[width=0.3\textwidth]{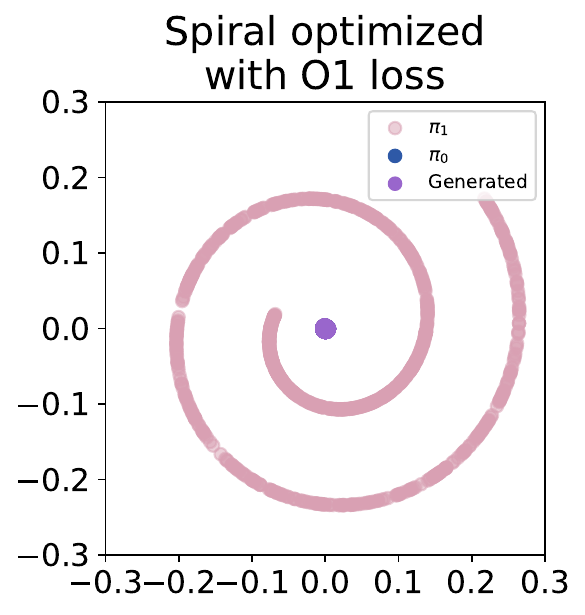}}
\subfloat[O1+O2]{\includegraphics[width=0.3\textwidth]{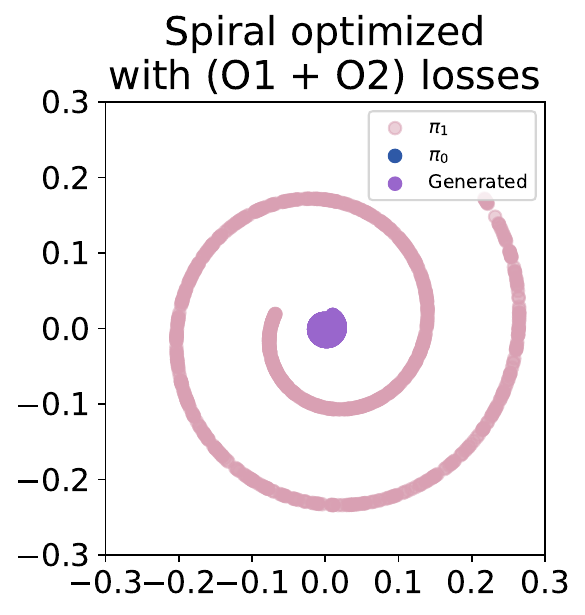}}
\subfloat[ForM, (Ours)]{\includegraphics[width=0.3\textwidth]{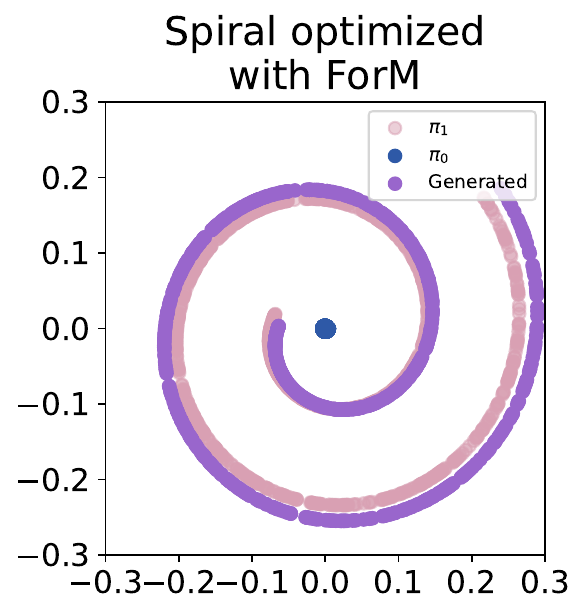}}
\caption{
\textbf{Left:} Flow matching~\cite{lcb+22} using only the first-order term.
\textbf{Middle:} An improved method that incorporates both first- and second-order terms.
\textbf{Right:} Our proposed ForM model applied to the Onedot, Halfmoons, and Spiral datasets.  
Note that the first-order method (O1) fails to capture the target distribution, and although the second-order method (O2) exhibits slight improvement, it still does not adequately model the target distribution. In contrast, by leveraging the Lorentz force to guide the trajectory evolution, the ForM model significantly enhances the accuracy of the target distribution, as further evidenced by the quantitative results in Table~\ref{tab:euclidean_distance_complex_datasets_new}.
}
\label{fig:onedot}
\ifdefined\isarxiv
\else
\Description{}
\fi
\end{figure*}

\subsection{Results analysis} \label{sec:exp:result_anapysis}

This section presents a comparative analysis of trajectory evolution on the \textit{Onedot} and \textit{Halfmoons} datasets. Figure~\ref{fig:onedot} illustrates the trajectories generated by three approaches: (i) the original flow matching method that utilizes only the first-order term (denote as O1), (ii) an enhanced variant that incorporates both first- and second-order terms (denote as O1+O2), and (iii) our proposed Force Matching model (denote as ForM) which explicitly integrates the influence of a Lorentz force field.

Although the inclusion of the second-order term introduces additional dynamic information, the enhanced variant still fails to capture the complex transportation trajectories required by these datasets. In contrast, by modeling the potential Lorentz force, the ForM model markedly improves performance. This integration acts as a guiding mechanism, effectively steering the model toward a more accurate approximation of the target distribution. In other words, the physics-inspired addition of the Lorentz force facilitates the learning process, enabling the model to navigate the intricacies of the complex transport dynamics more efficiently.

This qualitative improvement is further substantiated by the quantitative results reported in Table~\ref{tab:euclidean_distance_complex_datasets_new}, where the ForM model achieves the lowest Euclidean distance loss among the three methods. These findings confirm that incorporating domain-specific forces, such as the Lorentz force, can significantly enhance the modeling of complex transport phenomena.

\subsection{Euclidean distance loss}\label{sec:exp:euclidean_distance_loss}
This section quantitatively assesses different approaches using Euclidean distance loss, which quantifies the deviation between transported and target distributions. Table~\ref{tab:euclidean_distance_complex_datasets_new} presents the loss values across three methods on the Onedot dataset and Halfmoons dataset, the unit we use is 0.1 light seconds. Lower values indicate higher precision in distribution transfer. The results demonstrate that incorporating the second-order term (O1 + O2) enhances performance beyond the first-order term (O1) alone. 
Notably, ForM surpasses both baselines, achieving the lowest Euclidean distance loss, reinforcing its effectiveness in modeling transport trajectories. 

\begin{table}[!ht]
\begin{center}
\caption{ \textbf{Euclidean distance loss across three methods: Flow matching~\cite{lcb+22} with the first order term ({\textbf{O1}}), an advanced version integrating both first order and second order terms ({\textbf{O1 + O2}}), and our proposed ForM on the Onedot dataset and Halfmoons dataset.} Lower values represent greater accuracy in distribution transfer. The optimal values are displayed in \textbf{bold}, while the second-best values are \underline{underlined}. The qualitative results are provided in Figure~\ref{fig:onedot}. }
\label{tab:euclidean_distance_complex_datasets_new}
\begin{tabular}{|l|c|c|c|}
\hline
\textbf{Loss terms}  & \textbf{Onedot} & \textbf{Halfmoons} & \textbf{Spiral} \\
\hline
O1, \cite{lgl22}             & 2.146 & 5.853 & 1.666 \\
\hline
O1 + O2          & \underline{2.048} & \underline{5.793} & \underline{1.578} \\
\hline
ForM, (Ours)      & \textbf{0.509} & \textbf{0.714} & \textbf{0.124} \\
\hline
\end{tabular}
\end{center}
\end{table}

\section{Empirical Ablation Study} \label{sec:ablation}

In this section, we empirically evaluate the performance of our proposed ForM model through a series of ablation experiments. We begin by illustrating the Spiral dataset and the corresponding transport objective, where samples are moved from the distribution $\pi_0$ ({\textbf{blue}}) to $\pi_1$ ({\textbf{pink}}) (see Figure~\ref{fig:spiral_dataset}). We then compare three approaches: the baseline Flow matching method using solely the first-order term (O1), an enhanced variant that integrates both first-order and second-order loss metrics (O1 + O2), and our ForM model that leverages the Lorentz force to guide trajectory evolution. Qualitative results are shown in Figure~\ref{fig:onedot}, while Table~\ref{tab:euclidean_distance_complex_datasets_new} quantitatively demonstrates the superior performance of the ForM model.

\section{Discussion}

The core motivation behind the Force Matching (ForM) model stems from the need to stabilize generative modeling processes, particularly in flow-based methods where uncontrolled velocity magnitudes can lead to instability during sampling. Traditional approaches, such as Flow Matching (FM), offer different perspectives on modeling data evolution, yet they often lack explicit constraints that regulate sample movement, which can hinder both stability and efficiency. Inspired by special relativistic mechanics, we introduce a principled way to control velocity magnitudes through the Lorentz factor, ensuring that sample velocities remain bounded during the generative process. This formulation draws a parallel between relativistic motion and generative trajectories, where limiting sample speed prevents erratic behaviors and enhances robustness. By enforcing this constraint, ForM provides a novel perspective on generative modeling that aligns with both theoretical principles and practical stability considerations. 

While ForM establishes a strong foundation for stable generative modeling, several exciting directions remain open for further exploration. Extending ForM to high-dimensional image and video generation tasks would be a crucial next step, requiring efficient implementations of relativistic constraints in large-scale neural networks. Additionally, investigating how ForM’s velocity constraint can be integrated into score-based generative models could lead to hybrid approaches that combine the strengths of both paradigms. Another promising direction is the development of adaptive or learnable velocity constraints that dynamically regulate sample movement based on data complexity, potentially enhancing flexibility. More broadly, the incorporation of relativistic principles into generative modeling raises questions about the role of physics-inspired constraints in deep learning.
Finally, the implicit connection between ForM and optimal transport theory suggests that deeper theoretical investigations in this direction could lead to new generative frameworks grounded in optimal transport principles. 
By leveraging insights from physics and generative modeling, ForM paves the way for designing more stable, efficient, and interpretable generative models, inspiring further research into force-based approaches and their applications to complex data synthesis tasks.

\section{Conclusion} \label{sec:conclusion}

In this work, we introduce Force Matching (ForM), an innovative and comprehensive framework for generative modeling. ForM incorporates the principles of relativistic mechanics, higher-order flow matching, and TrigFlow, forming a unique and powerful synergy. The core idea behind ForM is to model the generative process in a way that accounts for both the geometric and dynamic aspects of flow, inspired by relativistic mechanics. We demonstrate that ForM not only preserves the structure of the data but also introduces an additional layer of stability to the generative process. Specifically, we theoretically prove that ForM bounds the velocity of the generative process under a hyperparameter $c$ during the sampling procedure, which leads to improved control and stabilization of the process. This stabilization mechanism mitigates issues such as mode collapse and sampling instability that often plague other generative models. 
Through extensive empirical experiments, we demonstrate that ForM outperforms both Flow Matching and second-order Flow Matching in terms of generative quality and sample diversity. These results highlight the effectiveness of incorporating higher-order dynamics and relativistic principles into the generative process. 
Additionally, we conduct an ablation study to evaluate the individual components of ForM, further demonstrating its superiority over existing methods. 
This analysis confirms the contribution of each aspect of the framework, such as the higher-order flow matching and relativistic dynamics, to its overall performance. 
ForM offers a fresh perspective on the understanding of Flow Matching within the context of relativistic mechanics, presenting a new paradigm in the field of generative modeling. 
By redefining the conceptual and practical foundations of generative processes, ForM sets a new benchmark for the future development of generative models.

\newpage
\ifdefined\isarxiv
\bibliographystyle{alpha}
\bibliography{ref}
\else
\bibliography{ref}

\newcommand{\etalchar}[1]{$^{#1}$}
\begin{thebibliography}{SDWMG15}

\bibitem[ACB17]{arjovsky2017wasserstein}
Martin Arjovsky, Soumith Chintala, and L{\'e}on Bottou.
\newblock Wasserstein gan.
\newblock {\em arXiv preprint arXiv:1701.07875}, 2017.

\bibitem[BMR{\etalchar{+}}20]{brown2020language}
Tom~B Brown, Benjamin Mann, Nick Ryder, Melanie Subbiah, Jared Kaplan, Prafulla Dhariwal, Arvind Neelakantan, Pranav Shyam, Girish Sastry, Amanda Askell, et~al.
\newblock Language models are few-shot learners.
\newblock {\em Advances in neural information processing systems}, 33:1877--1901, 2020.

\bibitem[CGL{\etalchar{+}}25]{cgl+25_homo}
Yuefan Cao, Chengyue Gong, Xiaoyu Li, Yingyu Liang, Zhizhou Sha, Zhenmei Shi, and Zhao Song.
\newblock Richspace: Enriching text-to-video prompt space via text embedding interpolation.
\newblock {\em arXiv preprint arXiv:2501.09982}, 2025.

\bibitem[CLL{\etalchar{+}}25a]{cll+25_deskrej}
Yuefan Cao, Xiaoyu Li, Yingyu Liang, Zhizhou Sha, Zhenmei Shi, Zhao Song, and Jiahao Zhang.
\newblock Dissecting submission limit in desk-rejections: A mathematical analysis of fairness in ai conference policies.
\newblock {\em arXiv preprint arXiv:2502.00690}, 2025.

\bibitem[CLL{\etalchar{+}}25b]{cll+25_var}
Yifang Chen, Xiaoyu Li, Yingyu Liang, Zhenmei Shi, and Zhao Song.
\newblock Universal approximation of visual autoregressive transformers.
\newblock {\em arXiv preprint arXiv:2502.06167}, 2025.

\bibitem[CLS{\etalchar{+}}24]{cls+24}
Bo~Chen, Yingyu Liang, Zhizhou Sha, Zhenmei Shi, and Zhao Song.
\newblock Hsr-enhanced sparse attention acceleration.
\newblock {\em arXiv preprint arXiv:2410.10165}, 2024.

\bibitem[CRBD18]{chen2018neural}
Tian~Qi Chen, Yulia Rubanova, Jesse Bettencourt, and David Duvenaud.
\newblock Neural ordinary differential equations.
\newblock In {\em Advances in Neural Information Processing Systems (NeurIPS)}, 2018.

\bibitem[CSY25]{csy25}
Yang Cao, Zhao Song, and Chiwun Yang.
\newblock Video latent flow matching: Optimal polynomial projections for video interpolation and extrapolation.
\newblock {\em arXiv preprint arXiv:2502.00500}, 2025.

\bibitem[CXJ24]{cxj24}
Zirui Cheng, Jingfei Xu, and Haojian Jin.
\newblock Treequestion: Assessing conceptual learning outcomes with llm-generated multiple-choice questions.
\newblock {\em Proceedings of the ACM on Human-Computer Interaction}, 8(CSCW2):1--29, 2024.

\bibitem[DFI{\etalchar{+}}15]{dosovitskiy2015flownet}
Alexey Dosovitskiy, Philipp Fischer, Eddy Ilg, Philip Hausser, Caner Hazirbas, Vladimir Golkov, Patrick van~der Smagt, Daniel Cremers, and Thomas Brox.
\newblock Flownet: Learning optical flow with convolutional networks.
\newblock In {\em IEEE International Conference on Computer Vision (ICCV)}, 2015.

\bibitem[DLT{\etalchar{+}}24]{dlt+24}
Wei Deng, Weijian Luo, Yixin Tan, Marin Bilo{\v{s}}, Yu~Chen, Yuriy Nevmyvaka, and Ricky~TQ Chen.
\newblock Variational schr$\backslash$" odinger diffusion models.
\newblock {\em arXiv preprint arXiv:2405.04795}, 2024.

\bibitem[DN21]{dn21}
Prafulla Dhariwal and Alexander Nichol.
\newblock Diffusion models beat gans on image synthesis.
\newblock {\em Advances in neural information processing systems}, 34:8780--8794, 2021.

\bibitem[DSF23]{dsf23}
Aram Davtyan, Sepehr Sameni, and Paolo Favaro.
\newblock Efficient video prediction via sparsely conditioned flow matching.
\newblock In {\em Proceedings of the IEEE/CVF International Conference on Computer Vision}, pages 23263--23274, 2023.

\bibitem[E{\etalchar{+}}05]{e+05}
Albert Einstein et~al.
\newblock Zur elektrodynamik bewegter k{\"o}rper.
\newblock {\em Annalen der physik}, 17(10):891--921, 1905.

\bibitem[EKB{\etalchar{+}}24]{ekb+24}
Patrick Esser, Sumith Kulal, Andreas Blattmann, Rahim Entezari, Jonas M{\"u}ller, Harry Saini, Yam Levi, Dominik Lorenz, Axel Sauer, Frederic Boesel, et~al.
\newblock Scaling rectified flow transformers for high-resolution image synthesis.
\newblock In {\em Forty-first International Conference on Machine Learning}, 2024.

\bibitem[FJL{\etalchar{+}}24]{fjl+24}
Tao Feng, Chuanyang Jin, Jingyu Liu, Kunlun Zhu, Haoqin Tu, Zirui Cheng, Guanyu Lin, and Jiaxuan You.
\newblock How far are we from agi.
\newblock {\em arXiv preprint arXiv:2405.10313}, 2024.

\bibitem[GCB{\etalchar{+}}18]{grathwohl2018ffjord}
Will Grathwohl, Ricky T.~Q. Chen, Jesse Bettencourt, Ilya Sutskever, and David Duvenaud.
\newblock {FFJORD}: Free-form continuous dynamics for scalable reversible generative models.
\newblock In {\em International Conference on Learning Representations (ICLR)}, 2018.

\bibitem[GDB{\etalchar{+}}24]{gdb+24}
Chengyue Gong, Xiaocong Du, Bhargav Bhushanam, Lemeng Wu, Xingchao Liu, Dhruv Choudhary, Arun Kejariwal, and Qiang Liu.
\newblock Layer compression of deep networks with straight flows.
\newblock In {\em Proceedings of the AAAI Conference on Artificial Intelligence}, pages 12181--12189, 2024.

\bibitem[GPAM{\etalchar{+}}14]{goodfellow2014generative}
Ian Goodfellow, Jean Pouget-Abadie, Mehdi Mirza, Bing Xu, David Warde-Farley, Sherjil Ozair, Aaron Courville, and Yoshua Bengio.
\newblock Generative adversarial nets.
\newblock In {\em Advances in neural information processing systems}, pages 2672--2680, 2014.

\bibitem[HJA20a]{hja20}
Jonathan Ho, Ajay Jain, and Pieter Abbeel.
\newblock Denoising diffusion probabilistic models.
\newblock {\em Advances in neural information processing systems}, 33:6840--6851, 2020.

\bibitem[HJA20b]{ho2020denoising}
Jonathan Ho, Ajay Jain, and Pieter Abbeel.
\newblock Denoising diffusion probabilistic models.
\newblock In {\em Advances in Neural Information Processing Systems}, volume~33, pages 6840--6851, 2020.

\bibitem[HWL{\etalchar{+}}24a]{hwl+24}
Jerry Yao-Chieh Hu, Weimin Wu, Yi-Chen Lee, Yu-Chao Huang, Minshuo Chen, and Han Liu.
\newblock On statistical rates of conditional diffusion transformers: Approximation, estimation and minimax optimality.
\newblock {\em arXiv preprint arXiv:2411.17522}, 2024.

\bibitem[HWL{\etalchar{+}}24b]{hwsl24}
Jerry Yao-Chieh Hu, Weimin Wu, Zhuoru Li, Sophia Pi, , Zhao Song, and Han Liu.
\newblock On statistical rates and provably efficient criteria of latent diffusion transformers (dits).
\newblock {\em Advances in Neural Information Processing Systems}, 38, 2024.

\bibitem[IMS{\etalchar{+}}17]{ilg2017flownet2}
Eddy Ilg, Nikolaus Mayer, Tonmoy Saikia, Margret Keuper, Alexey Dosovitskiy, and Thomas Brox.
\newblock {FlowNet2}: Evolution of optical flow estimation with deep networks.
\newblock In {\em IEEE Conference on Computer Vision and Pattern Recognition (CVPR)}, 2017.

\bibitem[JSL{\etalchar{+}}24]{jsl+24}
Yang Jin, Zhicheng Sun, Ningyuan Li, Kun Xu, Hao Jiang, Nan Zhuang, Quzhe Huang, Yang Song, Yadong Mu, and Zhouchen Lin.
\newblock Pyramidal flow matching for efficient video generative modeling.
\newblock {\em arXiv preprint arXiv:2410.05954}, 2024.

\bibitem[KAAL22]{kaal22}
Tero Karras, Miika Aittala, Timo Aila, and Samuli Laine.
\newblock Elucidating the design space of diffusion-based generative models.
\newblock {\em Advances in neural information processing systems}, 35:26565--26577, 2022.

\bibitem[KAL{\etalchar{+}}24]{kal+24}
Tero Karras, Miika Aittala, Jaakko Lehtinen, Janne Hellsten, Timo Aila, and Samuli Laine.
\newblock Analyzing and improving the training dynamics of diffusion models.
\newblock In {\em Proceedings of the IEEE/CVF Conference on Computer Vision and Pattern Recognition}, pages 24174--24184, 2024.

\bibitem[Kan42]{kantorovich1942transfer}
Leonid Kantorovich.
\newblock On the translocation of masses.
\newblock {\em Dokl. Akad. Nauk. USSR}, 37:199--201, 1942.

\bibitem[KLL{\etalchar{+}}25a]{kll+25}
Yekun Ke, Xiaoyu Li, Yingyu Liang, Zhizhou Sha, Zhenmei Shi, and Zhao Song.
\newblock On computational limits and provably efficient criteria of visual autoregressive models: A fine-grained complexity analysis.
\newblock {\em arXiv preprint arXiv:2501.04377}, 2025.

\bibitem[KLL{\etalchar{+}}25b]{kll+25_tc}
Yekun Ke, Xiaoyu Li, Yingyu Liang, Zhenmei Shi, and Zhao Song.
\newblock Circuit complexity bounds for visual autoregressive model.
\newblock {\em arXiv preprint arXiv:2501.04299}, 2025.

\bibitem[KLS{\etalchar{+}}25]{kls+25_dpbloom}
Yekun Ke, Yingyu Liang, Zhizhou Sha, Zhenmei Shi, and Zhao Song.
\newblock Dpbloomfilter: Securing bloom filters with differential privacy.
\newblock {\em arXiv preprint arXiv:2502.00693}, 2025.

\bibitem[KW13]{kingma2013auto}
Diederik~P Kingma and Max Welling.
\newblock Auto-encoding variational bayes.
\newblock {\em arXiv preprint arXiv:1312.6114}, 2013.

\bibitem[LCBH{\etalchar{+}}22]{lcb+22}
Yaron Lipman, Ricky~TQ Chen, Heli Ben-Hamu, Maximilian Nickel, and Matt Le.
\newblock Flow matching for generative modeling.
\newblock {\em arXiv preprint arXiv:2210.02747}, 2022.

\bibitem[LGL22]{lgl22}
Xingchao Liu, Chengyue Gong, and Qiang Liu.
\newblock Flow straight and fast: Learning to generate and transfer data with rectified flow.
\newblock {\em arXiv preprint arXiv:2209.03003}, 2022.

\bibitem[LLSS24]{llss24_softmax}
Chenyang Li, Yingyu Liang, Zhenmei Shi, and Zhao Song.
\newblock Exploring the frontiers of softmax: Provable optimization, applications in diffusion model, and beyond.
\newblock {\em arXiv preprint arXiv:2405.03251}, 2024.

\bibitem[LS24]{ls24}
Cheng Lu and Yang Song.
\newblock Simplifying, stabilizing and scaling continuous-time consistency models.
\newblock {\em arXiv preprint arXiv:2410.11081}, 2024.

\bibitem[LSS{\etalchar{+}}24a]{lss+24_relu}
Yingyu Liang, Zhizhou Sha, Zhenmei Shi, Zhao Song, and Yufa Zhou.
\newblock Looped relu mlps may be all you need as practical programmable computers.
\newblock {\em arXiv preprint arXiv:2410.09375}, 2024.

\bibitem[LSS{\etalchar{+}}24b]{lss+24_multi_layer}
Yingyu Liang, Zhizhou Sha, Zhenmei Shi, Zhao Song, and Yufa Zhou.
\newblock Multi-layer transformers gradient can be approximated in almost linear time.
\newblock {\em arXiv preprint arXiv:2408.13233}, 2024.

\bibitem[LSSS24]{lsss24_dpntk}
Yingyu Liang, Zhizhou Sha, Zhenmei Shi, and Zhao Song.
\newblock Differential privacy mechanisms in neural tangent kernel regression.
\newblock {\em arXiv preprint arXiv:2407.13621}, 2024.

\bibitem[LSSZ24]{lssz24_gm}
Yingyu Liang, Zhenmei Shi, Zhao Song, and Yufa Zhou.
\newblock Unraveling the smoothness properties of diffusion models: A gaussian mixture perspective.
\newblock {\em arXiv preprint arXiv:2405.16418}, 2024.

\bibitem[LZW{\etalchar{+}}24]{lzw+24}
Chengyi Liu, Jiahao Zhang, Shijie Wang, Wenqi Fan, and Qing Li.
\newblock Score-based generative diffusion models for social recommendations.
\newblock {\em arXiv preprint arXiv:2412.15579}, 2024.

\bibitem[MO14]{mirza2014conditional}
Mehdi Mirza and Simon Osindero.
\newblock Conditional generative adversarial nets.
\newblock In {\em arXiv preprint arXiv:1411.1784}, 2014.

\bibitem[Mon81]{monge1781memoire}
Gaspard Monge.
\newblock {\em Memoire sur la theorie des deblais et des remblais}.
\newblock 1781.

\bibitem[OKK16]{oord2016pixel}
Aaron van~den Oord, Nal Kalchbrenner, and Koray Kavukcuoglu.
\newblock Pixel recurrent neural networks.
\newblock In {\em International conference on machine learning}, pages 1747--1756, 2016.

\bibitem[RBL{\etalchar{+}}22]{rbl+22}
Robin Rombach, Andreas Blattmann, Dominik Lorenz, Patrick Esser, and Bj{\"o}rn Ommer.
\newblock High-resolution image synthesis with latent diffusion models.
\newblock In {\em Proceedings of the IEEE/CVF conference on computer vision and pattern recognition}, pages 10684--10695, 2022.

\bibitem[SDWMG15]{swm+15}
Jascha Sohl-Dickstein, Eric Weiss, Niru Maheswaranathan, and Surya Ganguli.
\newblock Deep unsupervised learning using nonequilibrium thermodynamics.
\newblock In {\em International conference on machine learning}, pages 2256--2265. PMLR, 2015.

\bibitem[SPH{\etalchar{+}}23]{sph+23}
Vignesh~Ram Somnath, Matteo Pariset, Ya-Ping Hsieh, Maria~Rodriguez Martinez, Andreas Krause, and Charlotte Bunne.
\newblock Aligned diffusion schr{\"o}dinger bridges.
\newblock In {\em Uncertainty in Artificial Intelligence}, pages 1985--1995. PMLR, 2023.

\bibitem[SSDK{\etalchar{+}}21]{song2021score}
Yang Song, Jascha Sohl-Dickstein, Diederik~P Kingma, Abhishek Kumar, Stefano Ermon, and Ben Poole.
\newblock Score-based generative modeling through stochastic differential equations.
\newblock In {\em International Conference on Learning Representations (ICLR)}, 2021.

\bibitem[SSZ{\etalchar{+}}25a]{ssz+25_dit}
Xuan Shen, Zhao Song, Yufa Zhou, Bo~Chen, Yanyu Li, Yifan Gong, Kai Zhang, Hao Tan, Jason Kuen, Henghui Ding, Zhihao Shu, Wei Niu, Pu~Zhao, Yanzhi Wang, and Jiuxiang Gu.
\newblock Lazydit: Lazy learning for the acceleration of diffusion transformers.
\newblock In {\em Proceedings of the AAAI Conference on Artificial Intelligence}, 2025.

\bibitem[SSZ{\etalchar{+}}25b]{ssz+25_prune}
Xuan Shen, Zhao Song, Yufa Zhou, Bo~Chen, Jing Liu, Ruiyi Zhang, Ryan~A. Rossi, Hao Tan, Tong Yu, Xiang Chen, Yufan Zhou, Tong Sun, Pu~Zhao, Yanzhi Wang, and Jiuxiang Gu.
\newblock Numerical pruning for efficient autoregressive models.
\newblock In {\em Proceedings of the AAAI Conference on Artificial Intelligence}, 2025.

\bibitem[TJY{\etalchar{+}}24]{tjy+24}
Keyu Tian, Yi~Jiang, Zehuan Yuan, Bingyue Peng, and Liwei Wang.
\newblock Visual autoregressive modeling: Scalable image generation via next-scale prediction.
\newblock {\em Advances in neural information processing systems}, 2024.

\bibitem[Vil08]{villani2008optimal}
Cedric Villani.
\newblock {\em Optimal transport: old and new}.
\newblock Springer, 2008.

\bibitem[VSP{\etalchar{+}}17]{vaswani2017attention}
Ashish Vaswani, Noam Shazeer, Niki Parmar, Jakob Uszkoreit, Llion Jones, Aidan~N Gomez, {\L}ukasz Kaiser, and Illia Polosukhin.
\newblock Attention is all you need.
\newblock In {\em Advances in neural information processing systems}, pages 5998--6008, 2017.

\bibitem[WCY{\etalchar{+}}23]{wcy+23}
Yuntao Wang, Zirui Cheng, Xin Yi, Yan Kong, Xueyang Wang, Xuhai Xu, Yukang Yan, Chun Yu, Shwetak Patel, and Yuanchun Shi.
\newblock Modeling the trade-off of privacy preservation and activity recognition on low-resolution images.
\newblock In {\em Proceedings of the 2023 CHI Conference on Human Factors in Computing Systems}, pages 1--15, 2023.

\bibitem[WCZ{\etalchar{+}}23]{wcz+23}
Yilin Wang, Zeyuan Chen, Liangjun Zhong, Zheng Ding, Zhizhou Sha, and Zhuowen Tu.
\newblock Dolfin: Diffusion layout transformers without autoencoder.
\newblock {\em arXiv preprint arXiv:2310.16305}, 2023.

\bibitem[WXZ{\etalchar{+}}24]{wxz+24}
Yilin Wang, Haiyang Xu, Xiang Zhang, Zeyuan Chen, Zhizhou Sha, Zirui Wang, and Zhuowen Tu.
\newblock Omnicontrolnet: Dual-stage integration for conditional image generation.
\newblock In {\em Proceedings of the IEEE/CVF Conference on Computer Vision and Pattern Recognition}, pages 7436--7448, 2024.

\bibitem[XLC{\etalchar{+}}24]{xlc+24}
Haiyang Xu, Yu~Lei, Zeyuan Chen, Xiang Zhang, Yue Zhao, Yilin Wang, and Zhuowen Tu.
\newblock Bayesian diffusion models for 3d shape reconstruction.
\newblock In {\em Proceedings of the IEEE/CVF Conference on Computer Vision and Pattern Recognition}, pages 10628--10638, 2024.

\bibitem[YLP{\etalchar{+}}24]{ylp+24}
Hanshu Yan, Xingchao Liu, Jiachun Pan, Jun~Hao Liew, Qiang Liu, and Jiashi Feng.
\newblock Perflow: Piecewise rectified flow as universal plug-and-play accelerator.
\newblock {\em arXiv preprint arXiv:2405.07510}, 2024.

\bibitem[ZCWT23]{zcwt23}
Xiang Zhang, Zeyuan Chen, Fangyin Wei, and Zhuowen Tu.
\newblock Uni-3d: A universal model for panoptic 3d scene reconstruction.
\newblock In {\em Proceedings of the IEEE/CVF International Conference on Computer Vision}, pages 9256--9266, 2023.

\bibitem[ZLCZ23]{zlcz23}
Kaiwen Zheng, Cheng Lu, Jianfei Chen, and Jun Zhu.
\newblock Improved techniques for maximum likelihood estimation for diffusion odes.
\newblock In {\em International Conference on Machine Learning}, pages 42363--42389. PMLR, 2023.

\bibitem[ZLKE23]{zlke23}
Linqi Zhou, Aaron Lou, Samar Khanna, and Stefano Ermon.
\newblock Denoising diffusion bridge models.
\newblock {\em arXiv preprint arXiv:2309.16948}, 2023.

\end{thebibliography}
\bibliographystyle{ACM-Reference-Format} 
\fi

\newpage 
\appendix
\ifdefined\isarxiv
\begin{center}
    \textbf{\LARGE Appendix}
\end{center}
\else
\section*{Appendix}
\fi
\paragraph{Roadmap.}
In Section~\ref{sec:miss_proof}, we provide a formal version of theoretical analysis and proofs.

\section{Theoretical Analysis} \label{sec:miss_proof}

In this section, we first provide the formal theorem and proof for the sampling ODE in Section~\ref{sub:app:samp_ode}. Then, we formally proved the speed limit of ForM's sampling ODE in Section~\ref{sub:app:speed_limit}. In Section~\ref{sub:app:form_trig}, we formally prove the derivation of the interpolation path of ForM with TrigFlow. Last, we illustrate the formal proof for relativistic force in Section~\ref{sub:app:force}.

\subsection{Sampling ODE} \label{sub:app:samp_ode}

Here, we restate the Theorem~\ref{thm:ode_form:informal} and state its proof.

\begin{theorem}[Sampling ODE, formal version of Theorem~\ref{thm:ode_form:informal}]\label{thm:ode_form:formal}
    Giving the force at position $x_t$ denoted as $f_t(x_t)$, we could solve for ForM sampling path $x_t$ by the following ODE
    \begin{align*}
    \ddot{x}_t = \frac{1}{m^{\rm lab} \gamma_t}(f_t^{\rm local} - \frac{\langle v_t^{\rm lab}, f_t^{\rm local} \rangle}{c^2} v_t^{\rm lab})
    \end{align*}
    where $x_0 \sim \N(0,I)$, $\dot{x}_0 = 0$.
\end{theorem}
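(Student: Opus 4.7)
The plan is to invert the relation supplied by Lemma~\ref{lem:equiv_relativistic_force:informal}, namely
\begin{align*}
f^{\rm local} = m^{\rm lab}\Bigl(\gamma_t a_t^{\rm lab} + \gamma_t^3 \frac{\langle v_t^{\rm lab}, a_t^{\rm lab}\rangle}{c^2} v_t^{\rm lab}\Bigr),
\end{align*}
and solve explicitly for $a_t^{\rm lab} = \ddot{x}_t$ in terms of $f^{\rm local}$, $v_t^{\rm lab}$, and $\gamma_t$. Conceptually, this equation already decomposes $f^{\rm local}$ into a component along $v_t^{\rm lab}$ and one perpendicular to it, so the task is to unscramble these two pieces.

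First, I would take the inner product of both sides with $v_t^{\rm lab}$ to isolate the scalar quantity $\langle v_t^{\rm lab}, a_t^{\rm lab}\rangle$. This yields
\begin{align*}
\langle v_t^{\rm lab}, f^{\rm local}\rangle = m^{\rm lab}\gamma_t \langle v_t^{\rm lab}, a_t^{\rm lab}\rangle \Bigl(1 + \gamma_t^2 \tfrac{\|v_t^{\rm lab}\|_2^2}{c^2}\Bigr).
\end{align*}
Next I would apply the key algebraic identity that follows directly from Definition~\ref{def:LorentzFactor}:
\begin{align*}
1 + \gamma_t^2 \frac{\|v_t^{\rm lab}\|_2^2}{c^2} = \frac{1}{1 - \|v_t^{\rm lab}\|_2^2/c^2} = \gamma_t^2,
\end{align*}
which collapses the parenthetical factor to $\gamma_t^2$ and gives the clean formula $\langle v_t^{\rm lab}, a_t^{\rm lab}\rangle = \langle v_t^{\rm lab}, f^{\rm local}\rangle/(m^{\rm lab}\gamma_t^3)$.

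Substituting this back into the Lemma~\ref{lem:equiv_relativistic_force:informal} expression cancels the $\gamma_t^3$ and $m^{\rm lab}$ in the parallel term, leaving $f^{\rm local} = m^{\rm lab}\gamma_t a_t^{\rm lab} + \tfrac{\langle v_t^{\rm lab}, f^{\rm local}\rangle}{c^2} v_t^{\rm lab}$. Rearranging and dividing by $m^{\rm lab}\gamma_t$ produces the ODE in the theorem statement. The initial conditions $x_0 \sim \mathcal{N}(0,I)$ and $\dot{x}_0 = 0$ are simply inherited from the ForM sampling setup described in Section~\ref{sub:samp_ode}, and under $\dot{x}_0 = 0$ the Lorentz factor $\gamma_0 = 1$ is well-defined so the ODE is nondegenerate at initialization.

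The only nontrivial step is recognizing and verifying the identity $1 + \gamma_t^2\|v_t^{\rm lab}\|_2^2/c^2 = \gamma_t^2$; everything else is linear algebra in the parallel/perpendicular decomposition. I expect no delicate analytic issues here since we are merely inverting an invertible linear map on $a_t^{\rm lab}$ (the map $a \mapsto \gamma_t a + \gamma_t^3 c^{-2}\langle v_t^{\rm lab}, a\rangle v_t^{\rm lab}$ is a rank-one perturbation of $\gamma_t I$, and the Sherman--Morrison-style inversion is realized cleanly by the dot-product trick above), so the main obstacle is only bookkeeping the cancellations correctly.
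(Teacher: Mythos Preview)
Your proposal is correct and reaches the same formula as the paper, but your execution is noticeably more streamlined than the paper's own argument. The paper proceeds by explicitly decomposing $a_t^{\rm lab} = a_{t,\parallel}^{\rm lab} + a_{t,\perp}^{\rm lab}$ relative to $v_t^{\rm lab}$, likewise splitting $f_t^{\rm local}$, solving separately for the parallel and perpendicular pieces, and then recombining through a chain of algebraic simplifications (seven displayed steps) that ultimately reduce via the same identity $\gamma_t^2 = (1 - \|v_t^{\rm lab}\|_2^2/c^2)^{-1}$. Your dot-product trick collapses all of this: by taking $\langle v_t^{\rm lab},\,\cdot\,\rangle$ once and invoking $1 + \gamma_t^2\|v_t^{\rm lab}\|_2^2/c^2 = \gamma_t^2$ up front, you extract $\langle v_t^{\rm lab}, a_t^{\rm lab}\rangle$ directly and substitute back in a single step, which is exactly the Sherman--Morrison inversion of the rank-one perturbation you identified. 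The paper's geometric decomposition is perhaps more transparent about \emph{why} the parallel and perpendicular directions behave differently under the Lorentz factor, but your route is shorter and avoids the bookkeeping of recombining the components.
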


\begin{proof}
Recall $f^{\rm local}$ from Lemma~\ref{lem:equiv_relativistic_force:formal}
\begin{align*}
    f_t^{\rm local} = m^{\rm lab}  (\gamma_t a_t^{\rm lab} + \gamma_t^3 \frac{ \langle v_t^{\rm lab}, a_t^{\rm lab} \rangle}{c^2} v_t^{\rm lab}),
\end{align*}
where $\gamma_t$ is the Lorentz factor defined in Definition~\ref{def:LorentzFactor}.

To solve for $a_t^{\rm lab}$, we could first decompose $a_t^{\rm lab}$ by
\begin{align*}
    a_t^{\rm lab} = a_{t,\parallel}^{\rm lab} + a_{t,\perp}^{\rm lab},
\end{align*}
where $a_{t,\parallel}^{\rm lab}$ denotes the component of $a_t^{\rm lab}$ parallel with $v_t^{\rm lab}$, and $a_{t,\perp}^{\rm lab}$ denotes the component of $a_t^{\rm lab}$ perpendicular with $v_t^{\rm lab}$.

According to the definition of parallel and perpendicular, we have
\begin{align*}
    a_{t,\parallel}^{\rm lab} = & ~ \frac{ \langle v_t^{\rm lab}, a_t^{\rm lab} \rangle}{\|v_t^{\rm lab}\|_2^2} v_t^{\rm lab}, \\
    a_{t,\perp}^{\rm lab} = & ~ a_t^{\rm lab} - a_{t,\parallel}^{\rm lab}.
\end{align*}

Then we have
\begin{align}
    f_t^{\rm local} = & ~ m^{\rm lab}  (\gamma_t a_t^{\rm lab} + \gamma_t^3 \frac{ \langle v_t^{\rm lab}, a_{t}^{\rm lab} \rangle}{c^2} v_t^{\rm lab}) \notag \\
    = & ~ m^{\rm lab}  (\gamma_t (a_{t,\parallel}^{\rm lab} + a_{t,\perp}^{\rm lab}) + \gamma_t^3 \frac{ \langle v_t^{\rm lab}, a_{t,\parallel}^{\rm lab} + a_{t,\perp}^{\rm lab} \rangle}{c^2} v_t^{\rm lab}) \notag \\
    = & ~ m^{\rm lab}  (\gamma_t (a_{t,\parallel}^{\rm lab} + a_{t,\perp}^{\rm lab}) + \gamma_t^3 \frac{ \langle v_t^{\rm lab}, a_{t,\parallel}^{\rm lab} \rangle}{c^2} v_t^{\rm lab}), \label{eq:f_split}
\end{align}
where the first step follows Lemma~\ref{lem:equiv_relativistic_force:formal}, the second step decomposes $a_t^{\rm lab}$, and the last step follows from the simple fact that $\langle v_t^{\rm lab}, a_{t,\perp}^{\rm lab} \rangle = 0$.

Then we decompose the $f_t^{\rm local}$ to $f_{t, \parallel}^{\rm local}$ and $f_{t, \perp}^{\rm local}$, where $f_{t, \parallel}^{\rm local}$ denotes the component of $f_t^{\rm local}$ parallel with $v_t^{\rm lab}$, and $f_{t,\perp}^{\rm local}$ denotes the component of $f_t^{\rm local}$ perpendicular with $v_t^{\rm lab}$.

For the perpendicular component, we have
\begin{align*}
    f_{t, \perp}^{\rm local} = & ~ m^{\rm lab}\gamma_t  a_{t,\perp}^{\rm lab} \\
    a_{t,\perp}^{\rm lab} =  & ~ \frac{f_{t, \perp}^{\rm local}}{m^{\rm lab}\gamma_t},
\end{align*}
where the first step uses the perpendicular part from Eq.~\ref{eq:f_split}, and the second step rewrites the equation to get a closed-form solution for $a_{t,\perp}^{\rm lab}$.

For the parallel component, we have
\begin{align*}
    f_{t, \parallel}^{\rm local} = & ~ m^{\rm lab} (\gamma_t  a_{t,\parallel}^{\rm lab} + \gamma_t^3 a_{t,\parallel}^{\rm lab} \frac{\|v_t^{\rm lab}\|_2^2}{c^2}) \\
    = & ~ m^{\rm lab} a_{t,\parallel}^{\rm lab}(\gamma_t + \gamma_t^3 \frac{\|v_t^{\rm lab}\|_2^2}{c^2}) \\
    a_{t,\parallel}^{\rm lab} = & ~ \frac{f_{t, \parallel}^{\rm local}}{m^{\rm lab}(\gamma_t + \gamma_t^3 \frac{\|v_t^{\rm lab}\|_2^2}{c^2})},
\end{align*}
where the first step uses the parallel part from Eq.~\ref{eq:f_split}, the second step factors out the $a_{t,\parallel}^{\rm lab}$, and the last step rewrites the equation to get a closed-form solution for $a_{t,\parallel}^{\rm lab}$.

Then, we can combine these two components
\begin{align*}
    a_t^{\rm lab} = & ~\frac{f_{t, \perp}^{\rm local}}{m^{\rm lab}\gamma_t} + \frac{f_{t, \parallel}^{\rm local}}{m^{\rm lab}(\gamma_t + \gamma_t^3 \frac{\|v_t^{\rm lab}\|_2^2}{c^2})} \\
    = & ~\frac{f_t^{\rm local} - \frac{\langle v_t^{\rm lab}, f_t^{\rm local} \rangle}{\|v_t^{\rm lab}\|_2^2} v_t^{\rm lab}}{m^{\rm lab}\gamma_t} + \frac{\frac{ \langle v_t^{\rm lab}, f_t^{\rm local} \rangle}{\|v_t^{\rm lab}\|_2^2} v_t^{\rm lab}}{m^{\rm lab}(\gamma_t + \gamma_t^3 \frac{\|v_t^{\rm lab}\|_2^2}{c^2})} \\
    = & ~\frac{1}{m^{\rm lab}\gamma_t} f_t^{\rm local} + \frac{1}{m^{\rm lab}} ( \frac{1}{\gamma_t(1 + \gamma_t^2 \frac{\|v_t^{\rm lab}\|_2^2}{c^2})} - \frac{1}{\gamma_t} ) \frac{ \langle v_t^{\rm lab}, f_t^{\rm local} \rangle}{\|v_t^{\rm lab}\|_2^2} v_t^{\rm lab} \\
    = & ~\frac{1}{m^{\rm lab}\gamma_t} f_t^{\rm local} + \frac{1}{m^{\rm lab}} ( \frac{1}{\gamma_t \frac{c^2}{c^2 - \|v_t^{\rm lab}\|_2^2} } - \frac{1}{\gamma_t} ) \frac{ \langle v_t^{\rm lab}, f_t^{\rm local} \rangle}{\|v_t^{\rm lab}\|_2^2} v_t^{\rm lab} \\
    = & ~\frac{1}{m^{\rm lab}\gamma_t} f_t^{\rm local} + \frac{1}{m^{\rm lab}} ( \frac{c^2 - \|v_t^{\rm lab}\|_2^2}{\gamma_t c^2} - \frac{1}{\gamma_t} ) \frac{ \langle v_t^{\rm lab}, f_t^{\rm local} \rangle}{\|v_t^{\rm lab}\|_2^2} v_t^{\rm lab} \\
    = & ~\frac{1}{m^{\rm lab}\gamma_t} f_t^{\rm local} - \frac{1}{m^{\rm lab} \gamma_t} \frac{\|v_t^{\rm lab}\|_2^2}{c^2} \frac{ \langle v_t^{\rm lab}, f_t^{\rm local} \rangle}{\|v_t^{\rm lab}\|_2^2} v_t^{\rm lab} \\
    = & ~ \frac{1}{m^{\rm lab} \gamma_t}(f_t^{\rm local} - \frac{\langle v_t^{\rm lab}, f_t^{\rm local} \rangle}{c^2} v_t^{\rm lab}),
\end{align*}
where the first step combines the two terms, the second step decomposes the components, the third step factors out the $\gamma_t$ in denominator, the forth uses the fact that $\gamma_t^2 = \frac{1}{1 - \|v_t^{\rm lab}\|_2^2/c^2}$, the fifth step moves the denominator into numerator, the sixth step merges two terms, and the last step factors out the $\frac{1}{m^{\rm lab} \gamma_t}$.
\end{proof}

\subsection{Speed Limit} \label{sub:app:speed_limit}

In this subsection, we first calculate the derivative of the squared norm of velocity, then restate the Theorem~\ref{thm:vel:informal} and provide its proof.

\begin{lemma}[Derivative of the squared norm of velocity]\label{lem:velocity_derivative}
    Let $X(t) := \frac{1}{2}\|v_t^{\rm lab}\|_2^2$. Then we have
    \begin{align*}
        \frac{\d X(t)}{\d t} = \frac{ \langle f_t^{\rm local}, v_t^{\rm lab} \rangle}{m^{\rm lab} \gamma_t}(1 - \frac{\|v_t^{\rm lab}\|_2^2}{c^2}). 
    \end{align*}
\end{lemma}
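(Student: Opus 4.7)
The plan is to differentiate $X(t) = \tfrac{1}{2}\langle v_t^{\rm lab}, v_t^{\rm lab}\rangle$ directly and then substitute the closed-form expression for the acceleration from Theorem~\ref{thm:ode_form:formal}. Since the inner product is symmetric and bilinear, a straightforward application of the product rule yields
\begin{align*}
\frac{\d X(t)}{\d t} = \langle v_t^{\rm lab}, \dot v_t^{\rm lab}\rangle = \langle v_t^{\rm lab}, a_t^{\rm lab}\rangle,
\end{align*}
so the lemma reduces to computing this one inner product.

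Next, I would plug in the sampling ODE expression $a_t^{\rm lab} = \frac{1}{m^{\rm lab}\gamma_t}\bigl(f_t^{\rm local} - \frac{\langle v_t^{\rm lab}, f_t^{\rm local}\rangle}{c^2} v_t^{\rm lab}\bigr)$. Taking the inner product with $v_t^{\rm lab}$ and using $\langle v_t^{\rm lab}, v_t^{\rm lab}\rangle = \|v_t^{\rm lab}\|_2^2$ produces two terms sharing the common factor $\langle v_t^{\rm lab}, f_t^{\rm local}\rangle$. Factoring this scalar out immediately gives the stated form
\begin{align*}
\frac{\langle f_t^{\rm local}, v_t^{\rm lab}\rangle}{m^{\rm lab}\gamma_t}\Bigl(1 - \frac{\|v_t^{\rm lab}\|_2^2}{c^2}\Bigr).
\end{align*}

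There is no real obstacle here; the argument is a two-line algebraic manipulation once Theorem~\ref{thm:ode_form:formal} is invoked. The only point worth being careful about is keeping the parallel/perpendicular decomposition consistent with the notation, i.e., recognizing that the $f_t^{\rm local}$ contribution to $\langle v_t^{\rm lab}, a_t^{\rm lab}\rangle$ only retains the parallel component of the force (since $1 - \|v_t^{\rm lab}\|_2^2/c^2 = 1/\gamma_t^2$), which is the underlying physical reason the Lorentz factor appears with an extra power in the denominator. This observation will also be useful in the next step of the development, where the upper bound $\|\dot x_t\|_2 < c$ is derived by showing $X(t) < c^2/2$ via a Gr\"onwall-type argument on the ODE above.
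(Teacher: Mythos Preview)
Your proposal is correct and matches the paper's proof essentially line for line: differentiate $X(t)$ to obtain $\langle v_t^{\rm lab}, a_t^{\rm lab}\rangle$, substitute the sampling ODE from Theorem~\ref{thm:ode_form:formal}, expand the inner product, and factor. One minor side remark: for the subsequent speed-limit step the paper does not use a Gr\"onwall-type argument but rather a direct sign argument on $\frac{\d X}{\d t}$ (the factor $1 - \|v_t^{\rm lab}\|_2^2/c^2$ vanishes at $\|v_t^{\rm lab}\|_2 = c$ and is negative beyond it), which is simpler than what you sketch.
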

\begin{proof}
Recall the sampling ODE
\begin{align}\label{eq:tmp}
    a_t^{\rm lab} = \frac{1}{m^{\rm lab} \gamma_t}(f_t^{\rm local} - \frac{\langle v_t^{\rm lab}, f_t^{\rm local} \rangle}{c^2} v_t^{\rm lab}),
\end{align}
where $\gamma_t$ is the Lorentz factor at lab time $t$ defined in Definition~\ref{def:LorentzFactor}.

We can show that
\begin{align*}
    \frac{\d X(t)}{\d t} = &~ \frac{\d}{\d t} \frac{1}{2}\|v_t^{\rm lab} \|_2^2 \\
    = &~ \langle v_t^{\rm lab}, \frac{\d}{\d t} v_t^{\rm lab} \rangle \\
    = &~ \langle v_t^{\rm lab}, a_t^{\rm lab} \rangle \\
    = &~ \langle v_t^{\rm lab}, \frac{1}{m^{\rm lab} \gamma_t}(f_t^{\rm local} - \frac{\langle v_t^{\rm lab}, f_t^{\rm local} \rangle}{c^2} v_t^{\rm lab}) \rangle \\
    = &~ \frac{1}{m^{\rm lab} \gamma_t} \langle f_t^{\rm local}, v_t^{\rm lab} \rangle - \frac{\langle v_t^{\rm lab}, f_t^{\rm local} \rangle}{m^{\rm lab} \gamma_t c^2} \| v_t^{\rm lab} \|_2^2 \\
    = &~ \frac{ \langle f_t^{\rm local}, v_t^{\rm lab} \rangle}{m^{\rm lab} \gamma_t}(1 - \frac{\|v_t^{\rm lab}\|_2^2}{c^2}) 
\end{align*}
where the first step follows from the definition of $X(t)$, the second step follows from the chain rule, the third step follows from the basic fact, the fourth step follows from the Eq.~\eqref{eq:tmp}, the fifth and last step follows from basic algebra.
\end{proof}

Here, we restate the Theorem~\ref{thm:vel:informal} and state its proof.

\begin{theorem}[Speed Limit, formal version of Theorem~\ref{thm:vel:informal}] \label{thm:vel:formal}
For a ForM model with sampling path $x : [0,T) \to \R^n$, the velocity satisfies
\begin{align*}
\| \dot{x}_t \|_2 < c \quad \text{for all } t \in [0,T).
\end{align*}
\end{theorem}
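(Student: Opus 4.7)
The plan is to convert the statement $\|\dot x_t\|_2 < c$ into a non-negativity statement for a single scalar quantity and then apply a differential-inequality / comparison argument. Concretely, I would introduce $Y(t) := 1 - \|v_t^{\rm lab}\|_2^2/c^2$, which is precisely $1/\gamma_t^2$ and which we want to stay strictly positive. Since $\dot x_0 = 0$ by the initial condition in Theorem~\ref{thm:ode_form:formal}, we start with $Y(0) = 1 > 0$, and the strict inequality $\|\dot x_t\|_2 < c$ is equivalent to $Y(t) > 0$.

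Next, I would differentiate $Y$ and use Lemma~\ref{lem:velocity_derivative} to express $\dot Y$ in closed form. Since $\d X/\d t = \frac{\langle f_t^{\rm local},v_t^{\rm lab}\rangle}{m^{\rm lab}\gamma_t}\bigl(1-\|v_t^{\rm lab}\|_2^2/c^2\bigr)$ and $1/\gamma_t = \sqrt{Y(t)}$, chain rule gives an ODE of the schematic form
\begin{align*}
\dot Y(t) \;=\; -\frac{2\langle f_t^{\rm local}, v_t^{\rm lab}\rangle}{m^{\rm lab} c^2}\, Y(t)^{3/2}.
\end{align*}
The crucial structural feature here is the exponent $3/2$ on $Y$: the right-hand side vanishes at $Y=0$ faster than linearly, which is the mechanism that will prevent $Y$ from ever hitting zero.

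I would then conclude by a contradiction / continuity argument. Suppose for contradiction there is a first time $t^\star \in (0,T)$ with $Y(t^\star)=0$. On $[0,t^\star]$ we have $\|v_t^{\rm lab}\|_2 \le c$, so Cauchy--Schwarz gives $|\langle f_t^{\rm local}, v_t^{\rm lab}\rangle| \le c\,\|f_t^{\rm local}\|_2$, and thus $\dot Y(t) \ge -K\, Y(t)^{3/2}$ for some constant $K$ depending on a uniform bound on $\|f_t^{\rm local}\|_2/m^{\rm lab}$ over $[0,t^\star]$. Comparing with the autonomous ODE $\dot y = -K y^{3/2}$, whose explicit solution $y(t) = (1+Kt/2)^{-2}$ is strictly positive for all $t\ge 0$, a standard comparison lemma forces $Y(t) \ge y(t) > 0$ on $[0,t^\star]$, contradicting $Y(t^\star)=0$.

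The main obstacle I anticipate is the technical subtlety at the boundary: as $\|v\|_2 \to c$, the Lorentz factor $\gamma_t$ blows up, so one must justify that the ODE for $Y$ remains well-posed along any candidate approach to the boundary and that the bound $\|f_t^{\rm local}\|_2 \le \text{const}$ holds uniformly up to $t^\star$. In a fully rigorous write-up this requires either a local regularity assumption on $f_t^{\rm local}$ along the trajectory (e.g., continuity on the closed interval) or an a-priori bound that holds throughout the maximal interval of existence; the comparison step itself is standard once those bounds are in place.
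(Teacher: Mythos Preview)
Your approach is correct and shares the same launching point as the paper—both invoke Lemma~\ref{lem:velocity_derivative} to compute $\tfrac{d}{dt}\|v_t^{\rm lab}\|_2^2$—but you diverge from the paper in how you conclude. The paper argues informally from the sign of the factor $(1-\|v\|_2^2/c^2)$: it observes that this factor vanishes at $\|v\|_2=c$ and would be negative beyond, so the speed ``cannot increase further'' once it reaches $c$. That argument is short but does not directly address the point you flag (why the speed never \emph{reaches} $c$, as opposed to never exceeding it), and it implicitly reasons about a regime where $\gamma_t$ is undefined.

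Your route—passing to $Y=1/\gamma_t^2$, extracting the superlinear factor $Y^{3/2}$, and comparing against the explicit solution $y(t)=(1+Kt/2)^{-2}$ of $\dot y=-Ky^{3/2}$—is a genuinely sharper argument. The exponent $3/2$ is the structural reason strict inequality holds (the boundary $Y=0$ cannot be reached in finite time from $Y(0)=1$), and your comparison makes this quantitative. The price is the mild regularity hypothesis on $f_t^{\rm local}$ you correctly identify; the paper's version sidesteps this only by not engaging with the finite-time question, so you are trading an added assumption for a rigorous strict bound.
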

\begin{proof}
Let $X(t) := \tfrac12 \|v_t^{\rm lab}\|_2^2$. By Lemma~\ref{lem:velocity_derivative}, we have
    \begin{align*}
        \frac{\d X(t)}{\d t} 
        \;=\; \frac{1}{m^{\rm lab} \gamma_t}
        \,\langle f_t^{\rm local}, v_t^{\rm lab} \rangle
        \,\Bigl(1 - \tfrac{\|v_t^{\rm lab}\|_2^2}{c^2}\Bigr).
    \end{align*}
    Observe that the factor $\bigl(1 - \|v_t^{\rm lab}\|_2^2 / c^2 \bigr)$ becomes negative if ever $\|v_t^{\rm lab}\|_2 > c$, and it is zero when $\|v_t^{\rm lab}\|_2 = c$. Thus, if the velocity norm were to exceed $c$ at some time, the derivative of $X(t)$ at that moment would be negative, forcing $X(t)$ (i.e., $\|v_t^{\rm lab}\|_2^2$) to decrease rather than increase. In particular, once $\|v_t^{\rm lab}\|_2^2$ reaches $c^2$, it cannot increase further.

    Consequently, for all $t \in [0,T)$ we must have $\|v_t^{\rm lab}\|_2 < c$, which proves the speed limit. Equivalently, since $\dot{x}_t = v_t^{\rm lab}$ in our notation, we conclude
    \begin{align*}
        \|\dot{x}_t\|_2 < c,
        \quad \forall t \in [0,T).
    \end{align*}
    Thus, we complete the proof.
    \end{proof}

\subsection{ForM with TrigFlow} \label{sub:app:form_trig}

We restate the Theorem~\ref{thm:form_trig:informal} and provide its proof.

\begin{theorem}[ForM with TrigFlow, formal version of Theorem~\ref{thm:form_trig:informal}] \label{thm:form_trig:formal}
We let $m = 1$ for simplicity in ForM. Giving a the interpolation $x_t = \alpha_t x_1 + \sigma_t x_0$, where $\alpha_T = 1$, $\alpha_0 = 0$, $\sigma_T = 0$, $\sigma_0 = 1$. We let $F_t(x_t)$ denote a vector map of force, a trainable neuron network parameterized with $\theta$. We select the $\alpha_t$ and $\sigma_t$ identical with TrigFlow \cite{ls24}, where $\alpha_t = \sin(t)$ and $\sigma_t = \cos(t)$, $T = \frac{\pi}{2}$. Then, force interpolation could be simplified to 
\begin{align*}
    f_t(x_t) =
    & ~ \frac{(\cos(t)x_1 - \sin(t)x_0) \cdot (-\sin(t)x_1 - \cos(t)x_0)}{c^2 - (\cos(t)x_1 - \sin(t)x_0)^2}\\
    & ~ (\cos(t)x_1 - \sin(t)x_0)).
\end{align*}
\end{theorem}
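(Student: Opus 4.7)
The plan is to specialize the equivalent form of the relativistic force from Lemma~\ref{lem:equiv_relativistic_force:informal} to the trigonometric interpolation by explicitly computing the first and second time derivatives of $x_t$, substituting the resulting $v_t^{\rm lab}$ and $a_t^{\rm lab}$ into the force formula, and finally rewriting every occurrence of the Lorentz factor in terms of $c^2 - \|v_t^{\rm lab}\|_2^2$ to match the announced closed form.

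First, I would differentiate the interpolation once to obtain $v_t^{\rm lab} = \dot{x}_t = \dot{\alpha}_t x_1 + \dot{\sigma}_t x_0 = \cos(t) x_1 - \sin(t) x_0$, and a second time to obtain $a_t^{\rm lab} = \ddot{x}_t = -\sin(t) x_1 - \cos(t) x_0$. The choice $\alpha_t = \sin(t)$, $\sigma_t = \cos(t)$ is what makes this computation clean: the derivatives merely permute the coefficients up to signs, and one automatically has $\alpha_t^2 + \sigma_t^2 = 1$, which will matter when squaring $v_t^{\rm lab}$.

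Next, by Definition~\ref{def:LorentzFactor} one has $\gamma_t^2 = c^2/(c^2 - \|v_t^{\rm lab}\|_2^2)$, so both $\gamma_t$ and $\gamma_t^3$ can be rewritten as rational functions of $c^2 - \|v_t^{\rm lab}\|_2^2$. Setting $m^{\rm lab}=1$ and plugging $v_t^{\rm lab}$, $a_t^{\rm lab}$, $\gamma_t$ into
\begin{align*}
f_t^{\rm local} \;=\; \gamma_t a_t^{\rm lab} + \gamma_t^3\,\frac{\langle v_t^{\rm lab},\,a_t^{\rm lab}\rangle}{c^2}\,v_t^{\rm lab}
\end{align*}
from Lemma~\ref{lem:equiv_relativistic_force:informal}, the scalar coefficient of $v_t^{\rm lab}$ in the parallel part collapses, after multiplying through by $c^2$ in numerator and denominator, to $\langle v_t^{\rm lab},a_t^{\rm lab}\rangle/(c^2 - \|v_t^{\rm lab}\|_2^2)$. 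Substituting the trigonometric expressions for $v_t^{\rm lab}$ and $a_t^{\rm lab}$ then yields precisely the announced formula (reading the product in the numerator and the square in the denominator as the inner product $\langle v_t^{\rm lab}, a_t^{\rm lab}\rangle$ and the squared norm $\|v_t^{\rm lab}\|_2^2$, respectively).

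The main obstacle will be the algebraic bookkeeping: one has to track several powers of $\gamma_t$, correctly combine the $\gamma_t a_t^{\rm lab}$ and $\gamma_t^3 \tfrac{\langle v_t^{\rm lab},a_t^{\rm lab}\rangle}{c^2} v_t^{\rm lab}$ contributions, and verify that the resulting scalar simplifies to the factor $\langle v_t^{\rm lab},a_t^{\rm lab}\rangle/(c^2-\|v_t^{\rm lab}\|_2^2)$ displayed in the statement. Beyond this, the derivation is a direct substitution exercise and does not require any new ideas beyond Lemma~\ref{lem:equiv_relativistic_force:informal}.
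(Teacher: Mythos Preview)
Your approach coincides with the paper's: start from Lemma~\ref{lem:equiv_relativistic_force:informal} with $m^{\rm lab}=1$, compute $\dot{x}_t=\cos(t)x_1-\sin(t)x_0$ and $\ddot{x}_t=-\sin(t)x_1-\cos(t)x_0$, and substitute directly into $f_t=\gamma_t\ddot{x}_t+\gamma_t^3\langle\dot{x}_t,\ddot{x}_t\rangle c^{-2}\dot{x}_t$.

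One caveat worth flagging: the paper's own derivation does \emph{not} end with the one-term expression in the statement. It produces the full two-term sum, retaining the perpendicular contribution $\gamma_t\,(-\sin(t)x_1-\cos(t)x_0)$ alongside the parallel piece displayed in the theorem. So your claim that the substitution ``yields precisely the announced formula'' is too optimistic; following your (and the paper's) steps leaves the $\gamma_t a_t^{\rm lab}$ term present, and the stated formula records only the second summand. Relatedly, since $\gamma_t^3/c^2=\gamma_t/(c^2-\|v_t^{\rm lab}\|_2^2)$, the scalar in front of $v_t^{\rm lab}$ in that parallel piece still carries a factor $\gamma_t$, not merely $\langle v_t^{\rm lab},a_t^{\rm lab}\rangle/(c^2-\|v_t^{\rm lab}\|_2^2)$ as you wrote; the paper's displayed expression shares this slip.
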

\begin{proof}
    We can show that
    \begin{align*}
        f_t(x_t) = & ~ \gamma_t \ddot{x}_t + \gamma_t^3 \frac{\langle \dot{x}_t, \ddot{x}_t \rangle}{c^2}\dot{x}_t \\
        = & ~ (1 - \frac{(\cos(t)x_1 - \sin(t)x_0)^2}{c^2})^{-\frac{1}{2}} (-\sin(t)x_1 - \cos(t)x_0) + \\
        & ~ \frac{(\cos(t)x_1 - \sin(t)x_0) \cdot (-\sin(t)x_1 - \cos(t)x_0)}{c^2 - (\cos(t)x_1 - \sin(t)x_0)^2}\\
        & ~ (\cos(t)x_1 - \sin(t)x_0)),
    \end{align*}
    where we use Definition~\ref{def:RelativisticForce} and substitute acceleration $a_t^{\rm lab}$ with $\ddot{x}_t$ and $v_t^{\rm lab}$ with $\dot{x}_t$ in the first step, and substitute $x_t = \sin(t) x_1 + \cos(t) x_0$ and calculate it's derivative in the second step.
\end{proof}

\subsection{Relativistic Force Property} \label{sub:app:force}

In this subsection, we restate Lemma~\ref{lem:equiv_relativistic_force:informal}, and show its proof.

\begin{lemma}[Equivalent Form of Relativistic Force, formal version of Lemma~\ref{lem:equiv_relativistic_force:informal}]\label{lem:equiv_relativistic_force:formal}
Let $p^{\rm lab}$ be the momentum defined in Eq.~\eqref{eq:p}, $\gamma_t$ be the Lorentz factor at lab time $t$ defined in Definition~\ref{def:LorentzFactor}, $\tau$ denotes the proper time, $v_t^{\rm lab} = \dot{x}_t$ denotes the velocity, 
$a_t^{\rm lab} = \ddot{x}_t$ denotes the acceleration.
The relativistic force, defined as the time derivative of the momentum in the lab frame, can be written as
\begin{align*}
f^{\rm local} =  m^{\rm lab}  (\gamma_t a_t^{\rm lab} + \gamma_t^3 \frac{ \langle v_t^{\rm lab}, a_t^{\rm lab} \rangle}{c^2} v_t^{\rm lab}).
\end{align*}

\end{lemma}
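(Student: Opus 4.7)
The plan is to unwind the definition of the relativistic force in Definition~\ref{def:RelativisticForce} by converting the proper-time derivative into a lab-time derivative and then expanding via the product rule. First, I would invoke Definition~\ref{def:ProperTime} to write $\d \tau = \d t / \gamma_t$, so that by the chain rule
\begin{align*}
    f^{\rm local} \;=\; \frac{\d p^{\rm lab}}{\d \tau} \;=\; \gamma_t\, \frac{\d p^{\rm lab}}{\d t}.
\end{align*}
I would then read Eq.~\eqref{eq:p} in the standard physics convention where the Lorentz factor sits inside $m^{\rm lab}$ (i.e.\ $p^{\rm lab} = \gamma_t\, m^{\rm lab}\, v_t^{\rm lab}$ for a constant rest mass $m^{\rm lab}$), so that the derivation reduces to computing $\tfrac{\d}{\d t}[\gamma_t\, v_t^{\rm lab}]$ up to the overall constant $m^{\rm lab}$.

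The next step is to apply the product rule, splitting the derivative into two pieces: one proportional to $\dot\gamma_t\, v_t^{\rm lab}$ and one proportional to $\gamma_t\, a_t^{\rm lab}$, using $\dot v_t^{\rm lab} = a_t^{\rm lab}$ by definition. To finish, I would compute $\dot\gamma_t$ explicitly by differentiating $\gamma_t = (1 - \|v_t^{\rm lab}\|_2^2 / c^2)^{-1/2}$ with respect to $t$ and invoking $\tfrac{\d}{\d t} \|v_t^{\rm lab}\|_2^2 = 2\, \langle v_t^{\rm lab}, a_t^{\rm lab}\rangle$; a short calculation yields $\dot\gamma_t = \gamma_t^3\, \langle v_t^{\rm lab}, a_t^{\rm lab}\rangle / c^2$. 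Substituting this back into the product-rule expansion and grouping terms delivers the claimed identity, where the coefficient of $v_t^{\rm lab}$ naturally becomes $\gamma_t^3$ and the coefficient of $a_t^{\rm lab}$ becomes $\gamma_t$.

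The main obstacle, as I see it, is not a genuine mathematical difficulty but rather keeping the $\gamma_t$-bookkeeping consistent between the definitions of $p^{\rm lab}$ and $f^{\rm local}$. A literal reading of Eq.~\eqref{eq:p} with $m^{\rm lab}$ treated as constant and independent of $v_t^{\rm lab}$ would produce only the single term $\gamma_t\, m^{\rm lab}\, a_t^{\rm lab}$, so recovering the two-term structure with its longitudinal $\gamma_t^3$ coefficient and transverse $\gamma_t$ coefficient requires interpreting $p^{\rm lab}$ as the spatial part of the relativistic four-momentum $\gamma_t\, m_0\, v_t^{\rm lab}$ (equivalently, absorbing the $\gamma_t$ into $m^{\rm lab}$). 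Once that convention is pinned down, the remainder is routine differentiation and reproduces the classical longitudinal/transverse relativistic mass split.
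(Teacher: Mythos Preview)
Your calculation of $\dot\gamma_t = \gamma_t^3\,\langle v_t^{\rm lab}, a_t^{\rm lab}\rangle/c^2$ and the product-rule expansion of $\tfrac{\d}{\d t}[\gamma_t v_t^{\rm lab}]$ are exactly right, and this is the substance of the paper's final step. The problem is that you insert one $\gamma_t$ too many before you get there. You first write $f^{\rm local} = \gamma_t\,\tfrac{\d p^{\rm lab}}{\d t}$ via the proper-time conversion, and then \emph{also} read $p^{\rm lab} = \gamma_t\,m^{\rm lab}\,v_t^{\rm lab}$. Carrying both factors through gives
\begin{align*}
f^{\rm local} \;=\; \gamma_t\,m^{\rm lab}\,\frac{\d}{\d t}\bigl[\gamma_t\,v_t^{\rm lab}\bigr]
\;=\; m^{\rm lab}\Bigl(\gamma_t^{2}\,a_t^{\rm lab} + \gamma_t^{4}\,\frac{\langle v_t^{\rm lab}, a_t^{\rm lab}\rangle}{c^2}\,v_t^{\rm lab}\Bigr),
\end{align*}
with exponents $2$ and $4$ rather than the $1$ and $3$ stated in the lemma. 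So the assertion that ``the coefficient of $v_t^{\rm lab}$ naturally becomes $\gamma_t^3$ and the coefficient of $a_t^{\rm lab}$ becomes $\gamma_t$'' does not follow from your setup; one of the two $\gamma_t$'s has to be dropped, and your proposal does not say which.

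The paper's proof passes in a single step from $\tfrac{\d p^{\rm lab}}{\d\tau}$ to $\tfrac{\d (m^{\rm lab}\,v_t^{\rm lab}\,\gamma_t)}{\d t}$, citing Eq.~\eqref{eq:p} and Definition~\ref{def:ProperTime} together; in effect it introduces exactly \emph{one} $\gamma_t$, placed inside the lab-time derivative, then pulls the constant $m^{\rm lab}$ out and applies the product rule. Your last paragraph correctly flags the bookkeeping as the delicate part, but the resolution you propose double-counts. To match the lemma as stated you must either keep $p^{\rm lab} = \gamma_t\,m^{\rm lab}\,v_t^{\rm lab}$ but differentiate with respect to lab time $t$ rather than $\tau$, or follow the paper and collapse the proper-time $\gamma_t$ and the momentum $\gamma_t$ into a single factor; either route lands on $m^{\rm lab}\,\tfrac{\d}{\d t}[\gamma_t v_t^{\rm lab}]$, from which your product-rule computation finishes the job.
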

\begin{proof}
We can show that
\begin{align*}
f^{\rm local}  = & ~ \frac{\d p^{\rm lab}}{\d \tau}  \\
= & ~ \frac{\d m^{\rm lab} v_t^{\rm lab} \gamma_t}{\d t} \\
= & ~ m^{\rm lab} \frac{\d v_t^{\rm lab} \gamma_t}{\d t} \\
= & ~ m^{\rm lab}  (\gamma_t a_t^{\rm lab} + \gamma_t^3 \frac{ \langle v_t^{\rm lab}, a_t^{\rm lab} \rangle}{c^2} v_t^{\rm lab}).
\end{align*}
where the first step follows from Eq.~\eqref{eq:f_local}, the second step follows Eq.~\eqref{eq:p} and Definition~\ref{def:ProperTime}, the third step is true because $m^{\rm lab}$ is a constant, and the last step takes the derivative.
\end{proof}




\end{document}